\newtheorem{theorem}{Theorem}
\newtheorem{lemma}{Lemma}
\newif\if@restonecol  
\algnewcommand\LeftComment[2]{%
	\hspace{#1\algindent}$\triangleright$ \eqparbox{COMMENT}{#2} \hfill %
}
\title{Confidence Adaptive Regularization for Deep Learning with Noisy Labels}
\author{%
  Yangdi Lu \\
  Department of Computing and Software\\
  McMaster University\\
  Hamilton, Canada \\
  \texttt{luy100@mcmaster.ca} \\
   \And
  Yang Bo \\
 Department of Computing and Software\\
 McMaster University\\
 Hamilton, Canada \\
 \texttt{boy2@mcmaster.ca} \\
 \And
 Wenbo He \\
 Department of Computing and Software\\
 McMaster University\\
 Hamilton, Canada \\
 \texttt{hew11@mcmaster.ca} \\
}
\begin{document}
\maketitle

\begin{abstract}

	Recent studies on the memorization effects of deep neural networks on noisy labels show that the networks first fit the correctly-labeled training samples before memorizing the mislabeled samples. Motivated by this early-learning phenomenon, we propose a novel method to prevent memorization of the mislabeled samples. Unlike the existing approaches which use the model output to identify or ignore the mislabeled samples, we introduce an indicator branch to the original model and enable the model to produce a confidence value for each sample. The confidence values are incorporated in our loss function which is learned to assign large confidence values to correctly-labeled samples and small confidence values to mislabeled samples. We also propose an auxiliary regularization term to further improve the robustness of the model. To improve the performance, we gradually correct the noisy labels with a well-designed target estimation strategy. We provide the theoretical analysis and conduct the experiments on synthetic and real-world datasets, demonstrating that our approach achieves comparable results to the state-of-the-art methods.
	
	

	  
\end{abstract}

\section{Introduction}
With the emergence of highly-curated datasets such as ImageNet \cite{deng2009imagenet} and CIFAR-10 \cite{krizhevsky2009learning}, deep neural networks have achieved remarkable performance on many classification tasks \cite{krizhevsky2012imat,noh2015learning,he2016deep}. However, it is extremely time-consuming and expensive to label a new large-scale dataset with high-quality annotations. Alternatively, we may obtain the dataset with lower quality annotations efficiently through online keywords queries \cite{li2017webvision} or crowdsourcing \cite{yu2018learning}, but \emph{noisy labels} are inevitably introduced consequently. Previous studies \cite{arpit2017closer,zhang2018understanding} demonstrate that noisy labels are problematic for overparameterized neural networks, resulting in overfitting and performance degradation. Therefore, it is essential to develop noise-robust algorithms for deep learning with noisy labels.

The authors of \cite{arpit2017closer,zhang2018understanding,li2020gradient,liu2020early} have observed that deep neural networks learn to correctly predict the true labels for all training samples during \emph{early learning} stage, and begin to make incorrect predictions in \emph{memorization} stage as it gradually memorizes the mislabeled samples (in Figure \ref{fig:memorization} (a) and (b)). In this paper, we introduce a novel regularization approach to prevent the memorization of mislabeled samples (in Figure \ref{fig:memorization} (c)). Our contributions are summarized as follows:

\begin{itemize}[leftmargin=0.3cm]
	\item We introduce an indicator branch to estimate the confidence of model prediction and propose a novel loss function called confidence adaptive loss (CAL) to exploit the early-learning phase. A high confidence value is likely to be associated with a clean sample and a low confidence value with a mislabeled sample. Then, we add an auxiliary regularization term forming a confidence adaptive regularization (CAR) to further segregate the mislabeled samples from the clean samples. We also develop a strategy to estimate the target probability instead of using the noisy labels directly, allowing the proposed model to suppress the influence of the mislabeled samples successfully.
	\item We theoretically analyze the gradients of the proposed loss functions and compare them with cross-entropy loss. We demonstrate that CAL and CAR have similar effects to existing regularization-based approaches. Both neutralize the influence of the mislabeled samples on the gradient, and ensure the contribution from correctly-labeled samples to the gradient remains dominant. We also prove the robustness of the auxiliary regularization term to label noise. 
 	\item We show that the proposed approach achieves comparable and even better performance to the state-of-the-art methods on four benchmarks with different types and levels of label noise. We also perform an ablation study to evaluate the influence of different components. 
\end{itemize}

\begin{figure}[t]
	\begin{center}
		\includegraphics[width=1.0\linewidth]{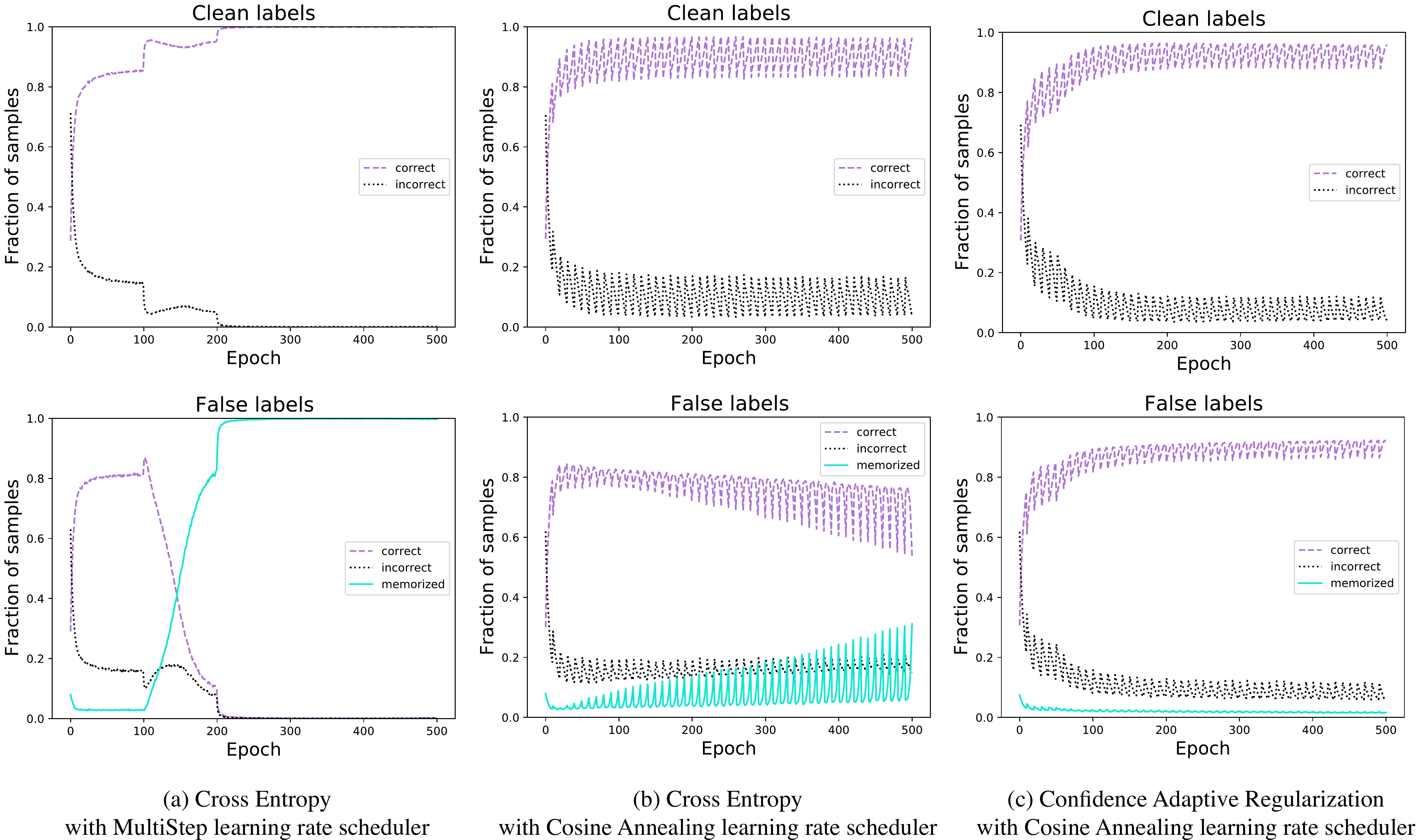}
	\end{center}
	\caption{We conduct the experiments on the CIFAR-10 dataset with 40\% symmetric label noise using ResNet34 \cite{he2016deep}. The top row shows the fraction of samples with clean labels that are predicted correctly (purple) and incorrectly (black). In contrast, the bottom row shows the fraction of samples with false labels that are predicted correctly (purple), \emph{memorized} (i.e. the prediction equals the false label, shown in blue), and incorrectly predicted as neither the true nor the labeled class (black). For samples with clean labels, all three models predict them correctly with the increasing of epochs. However, for false labels in (a), the model trained with cross-entropy loss first predicts the true labels correctly, but eventually memorizes the false labels. With the cosine annealing learning rate scheduler \cite{loshchilov2016sgdr} in (b), the model only slows down the speed of memorizing the false labels. However, our approach shown in (c) effectively prevents memorization, allowing the model to continue learning the correctly-labeled samples to attain high accuracy on samples with both clean and false labels.}
	\label{fig:memorization}
	\vspace{-0.5em}
\end{figure}

\section{Related work}
%
We briefly discuss the related noise-robust methods that do not require a set of clean training data (as opposed to \cite{xiao2015learning,vahdat2017toward,veit2017learning,li2017learning,hendrycks2018using,ren2018learning,lee2018cleannet}) and assume the label noise is instance-independent (as opposed to \cite{cheng2020learning,xia2020parts}). 

\textbf{Loss correction} These approaches focus on correcting the loss function explicitly by estimating the noise transition matrix \cite{goldberger2016training,patrini2017making,tanno2019learning,xia2019anchor}. \textbf{Robust loss functions} These studies develop loss functions that are robust to label noise, including $\mathcal{L}_\text{DMI}$ \cite{xu2019l_dmi}, MAE \cite{ghosh2017robust}, GCE \cite{zhang2018generalized}, IMAE \cite{wang2019imae}, SL \cite{wang2019symmetric} NCE \cite{ma2020normalized} and TCE \cite{feng2020can}. Above two categories of methods do not utilize the early learning phenomenon.  

\textbf{Sample selection} During the early learning stage, the samples with smaller loss values are more likely to be the correctly-labeled samples. Based on this observation, 
MentorNet \cite{jiang2017mentornet} pre-trains a mentor network for selecting small-loss samples to guide the training of the student network. Co-teaching related methods \cite{han2018co,yu2019does,wei2020combating,lu2021co} maintain two networks, and each network is trained on the small-loss samples selected by its peer network. However, their limitation is that they may eliminate numerous useful samples for robust learning. 
\textbf{Label correction} \cite{tanaka2018joint,yi2019probabilistic} replace the noisy labels with soft (i.e. model probability) or hard (i.e to one-hot vector) pseudo-labels. Bootstrap \cite{reed2014training} corrects the labels by using a convex combination of noisy labels and the model predictions. SAT \cite{huang2020self} weighs the sample with its maximal class probability in cross-entropy loss and corrects the labels with model predictions. \cite{pmlr-v97-arazo19a} weighs the clean and mislabeled samples by fitting a two-component Beta mixture model to loss values, and corrects the labels via convex combination as in \cite{reed2014training}. Similarly, DivideMix \cite{li2020dividemix} trains two networks to separate the clean and mislabeled samples via a two-component Gaussian mixture model, and further uses MixMatch \cite{berthelot2019mixmatch} to enhance the performance. \textbf{Regularization} \cite{li2020gradient} observes that when the model parameters remain close to the initialization, gradient descent \emph{implicitly} ignores the noisy labels. Based on this observation, they prove the gradient descent early stopping is an effective regularization to achieve robustness to label noise. \cite{hu2019simple} \emph{explicitly} adds the regularizer based on neural tangent kernel \cite{jacot2018neural} to limit the distance between the model parameters to
initialization. 
ELR \cite{liu2020early} estimates the target probability by temporal ensembling \cite{laine2016temporal} and adds a regularization term to cross entropy loss to avoid memorization. Other regularization techniques, such as mixup augmentation \cite{zhang2017mixup}, label smoothing \cite{pereyra2017regularizing} and weight averaging \cite{tarvainen2017mean}, can enhance the performance. 

Our approach is related to regularization and label correction. Compared with existing approaches \cite{hu2019simple,liu2020early}, where a regularization term in loss function is necessary to resist mislabeled samples, we propose a loss function CAL which \emph{implicitly} boosts the gradients of correctly labeled samples and diminishes the gradients of mislabeled samples. The auxiliary regularization term in our approach is an add-on component to further improve the performance in more challenging cases. We then propose a novel strategy to estimate the target and correct the noisy labels. In addition, our approach is simpler and yields comparable performance without applying other regularization techniques.

\section{Methodology}
This section presents a framework called confidence adaptive regularization (CAR) for robust learning from noisy labels. Our approach consists of three key elements: (1) We add an indicator branch to the original deep neural networks and estimate the confidence of the model predictions by exploiting the early-learning phenomenon through a confidence adaptive loss (CAL). (2) We propose an auxiliary regularization term explicitly designed to further separate the confidence of clean samples and mislabeled samples. (3) We estimate the target probabilities by incorporating the model predictions with noisy labels through a confidence-driven strategy. In addition, we analyze the gradients of CAL and CAR, and provide a theoretical guarantee for the noise-robust term in CAR.

\begin{figure}[t]
	\begin{center}
		\includegraphics[width=1.0\linewidth]{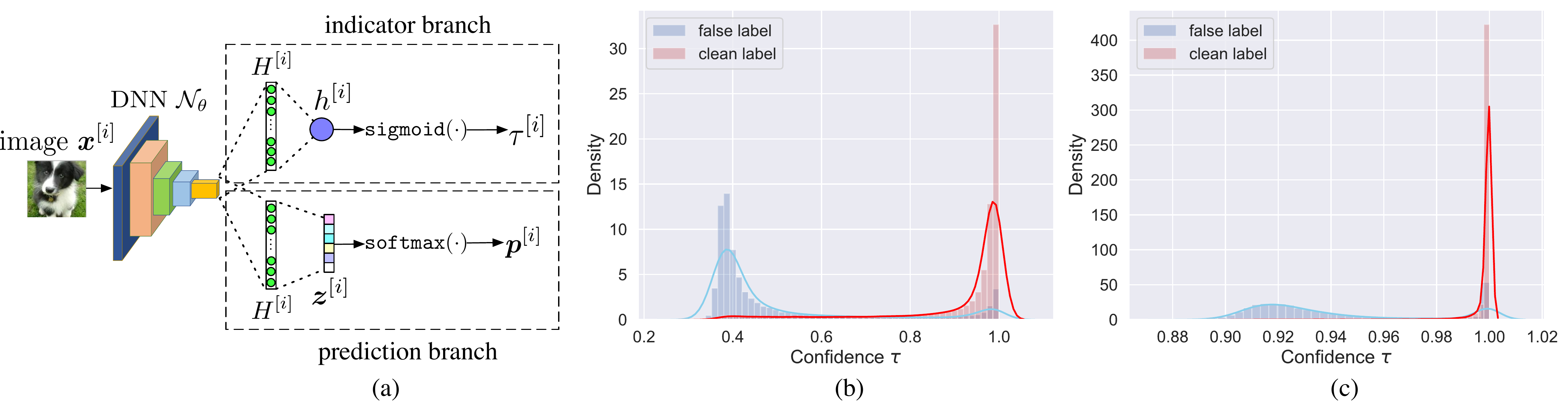}
	\end{center}
	\caption{In (a), we introduce an indicator branch in addition to the prediction branch. Given an input image $\bm{x}^{[i]}$, the indicator branch produces a single scalar value $\tau^{[i]}$ to indicate confidence and the prediction branch produces the softmax prediction probability $\bm{p}^{[i]}$. (b) and (c) show the density of confidence $\tau$ on the CIFAR-10 and CIFAR-100 with 40\% symmetric label noise respectively.}
	\label{fig:framework}
	\vspace{-1em}
\end{figure}

\vspace{-1em}
\subsection{Preliminary}
\label{subsec:pre}
Consider the $K$-class classification problem in noisy-label scenario, we have a training set $D=\{(\bm{x}^{[i]},\hat{y}^{[i]})\}^{N}_{i=1}$, where $\bm{x}^{[i]}$ is an input and $\hat{y}^{[i]} \in \mathcal{Y} = \{1,\dots,K\}$ is the corresponding noisy label. We denote $\bm{\hat{y}}^{[i]}\in \{0,1\}^{K}$ as one-hot vector of noisy label $\hat{y}^{[i]}$. The ground truth label $y$ is unavailable. A deep neural network model $\mathcal{N}_{\theta}$ (i.e. prediction branch in Figure \ref{fig:framework} (a)) maps an input $\bm{x}^{[i]}$ to a $K$-dimensional logits and then feeds the logits to a softmax function $\mathcal{S}(\cdot)$ to obtain $\bm{p}^{[i]}$ of the conditional probability of each class given $\bm{x}^{[i]}$, thus $\bm{p}^{[i]}=\mathcal{S}(\bm{z}^{[i]})$, $\bm{z}^{[i]}= \mathcal{N}_{\theta}(\bm{x}^{[i]})$.
$\theta$ denotes the parameters of the neural network and $\bm{z}^{[i]} \in \mathbb{R}^{K \times 1}$ denotes the $K$-dimensional logits (i.e. pre-softmax output). $\bm{z}^{[i]}$ is calculated by the fully connected layer from penultimate layer $H^{[i]} \in \mathbb{R}^{M\times 1}$. $\bm{z}^{[i]} = WH^{[i]}+\bm{b}$, where $W \in \mathbb{R}^{K\times M}$ denotes the weights and $\bm{b} \in \mathbb{R}^{K\times 1}$ denotes the bias in penultimate layer. Usually, the cross-entropy (CE) loss reflects how well the model fits the training set $D$:
\begin{align}
	\label{eq:ce}
	\mathcal{L}_{ce}=-\frac{1}{N}\sum_{i=1}^{N}(\bm{\hat{y}}^{[i]})^{T}\log(\bm{p}^{[i]}).
\end{align}
However, as noisy label $\hat{y}^{[i]}$ may be wrong with relatively high probability, the model gradually memorizes the samples with false labels when minimizing $\mathcal{L}_{ce}$ (in Figure \ref{fig:memorization} (a) and (b)).

\subsection{Confidence adaptive loss}
\label{sec:cal}
In addition to the prediction branch, we introduce an indicator branch just after the penultimate layer of the original model (in Figure \ref{fig:framework} (a)). The $M$-dimensional penultimate layer $H^{[i]}$ is shared in both branches. For each input $\bm{x}^{[i]}$, the prediction branch produces the softmax prediction $\bm{p}^{[i]}$ as usual. The indicator branch contains one or more fully connected layers to produce a single scalar value $h^{[i]}$, and $\mathtt{sigmoid}$ function is applied to scale it between 0 to 1. Assume we use one fully connected layer, $h^{[i]}=W'H^{[i]}+b'$, where $W' \in \mathbb{R}^{1\times M}$ denotes the weights and $b' \in \mathbb{R}$ denotes the bias in the penultimate layer of the indicator branch. Thus, we have
\begin{align}
	\tau^{[i]} = \mathtt{sigmoid} (h^{[i]}), \quad \tau^{[i]} \in (0,1), 
\end{align}
where $\tau^{[i]}$ denotes the confidence value of model prediction given input $\bm{x}^{[i]}$. The early-learning phenomenon reveals that the deep neural networks memorize the correctly-labeled samples before the mislabeled samples. Thus, we hypothesize that, a sample with a clean label \emph{in expectation} has a larger confidence value than a mislabeled sample in the early learning phase. To let confidence value $\tau$ capture it, we propose the confidence adaptive cross entropy (CACE) loss 
\begin{align}
	\mathcal{L}_{cace}=-\frac{1}{N}\sum_{i=1}^{N}(\bm{t}^{[i]})^{T}\log\big(\tau^{[i]}  
	(\bm{p}^{[i]}-\bm{t}^{[i]})+\bm{t}^{[i]}\big), 
\end{align}
where $\bm{t}^{[i]}$ is the \emph{target} vector for each sample $\bm{x}^{[i]}$. Generally, one can directly set $\bm{t}^{[i]}=\bm{\hat{y}}^{[i]}$. However, it is less effective as $\bm{\hat{y}}^{[i]}$ can be wrong, so we propose a strategy to estimate $\bm{t}^{[i]}$ in Section \ref{sec:target}.
Intuitively, $\mathcal{L}_{cace}$ can be explained in two-fold: 1) In the early-learning phase, the model does not overfit the mislabeled samples. Therefore, their $\bm{p}-\bm{t}$ remain large. By minimizing $\mathcal{L}_{cace}$, it forces $\tau$ of mislabeled samples toward 0 as desired. 2) As for correctly-labeled samples, the model memorizes them first, resulting in the small $\bm{p}-\bm{t}$. Thus, it makes $\tau$ have no influence on minimizing $\mathcal{L}_{cace}$ in the case of correctly-labeled samples. As a result, by only minimizing $\mathcal{L}_{cace}$, we may obtain a trivial optimization that the model always produces $\tau \rightarrow 0$ for any inputs. To avoid this lazy learning circumstance, we introduce a penalty loss $\mathcal{L}_{p}$ as a cost.
\begin{align}
	\mathcal{L}_{p}=-\frac{1}{N}\sum_{i=1}^{N}\log(\tau^{[i]}), 
\end{align}
wherein the target value of $\tau$ is always 1 for all inputs. By adding a term $\mathcal{L}_{p}$ to $\mathcal{L}_{cace}$, $\tau$ of correctly labeled samples are pushed to 1, and $\tau$ of mislabeled samples tend to 0 as expected. Hence, we define the confidence adaptive loss as
\begin{align}
\label{eq:Lcal}
 \mathcal{L}_{\textrm{CAL}}=\mathcal{L}_{cace}+\lambda \mathcal{L}_{p}\text{,} 
\end{align}
where $\lambda$ controls the strength of penalty loss. 
As we can see in Figure \ref{fig:framework} (b) and (c), the confidence value $\tau$ successfully segregates the mislabeled samples from correctly-labeled samples.


\subsection{Auxiliary regularization term}
\label{sec:car}
We observe that the early learning phenomenon is not obvious when a dataset contains too many classes (e.g. CIFAR100), i.e, the mean of $\tau$ distributions for clean samples and mislabeled samples are close to each other as shown in Figure \ref{fig:framework} (c). Then $\mathcal{L}_\text{CAL}$ is likely to be reduced to $\mathcal{L}_{ce}$. To enhance the performance in this situation, we need to make $\tau$ of mislabeled samples closer to 0. Hence we propose a reverse confidence adaptive cross entropy as an auxiliary regularization term.
\begin{align}
	\mathcal{L}_{r\textrm{-}cace}=-\frac{1}{N}\sum_{i=1}^{N}\big(\tau^{[i]}(\bm{p}^{[i]}-\bm{t}^{[i]})+\bm{t}^{[i]}\big)^{T}\log(\bm{t}^{[i]}). 
\end{align}
As the target $\bm{t}^{[i]}$ is inside of the logarithm in $\mathcal{L}_{r\textrm{-}cace}$, this could cause computational problem when $\bm{t}^{[i]}$ contains zeros. Similar to clipping operation, we solve it by defining $\log(0)=A$ (where $A$ is a negative constant), which will be proved important for the theoretical analysis in Section \ref{sec:theore}. Putting all together, the confidence adaptive regularization (CAR) is
\begin{align}
\label{eq:car}
	\mathcal{L}_{\textrm{CAR}} = \mathcal{L}_{\textrm{CAL}} + \beta \mathcal{L}_{r\textrm{-}cace} =  \mathcal{L}_{cace} + \lambda \mathcal{L}_{p} + \beta \mathcal{L}_{r\textrm{-}cace}, 
\end{align}
where $\beta$ controls the strength of regularization carried by $\mathcal{L}_{r\textrm{-}cace}$. In summary, $\mathcal{L}_{cace}$ is designed for learning confidence by exploiting the early-learning phenomenon. $\mathcal{L}_{p}$ is adopted for avoiding trivial solution. $\mathcal{L}_{r\textrm{-}cace}$ makes CAR robust to label noise even in challenging cases. 

\subsection{Target estimation}
\label{sec:target}
CAR requires a target probability $\bm{t}$ for each sample in the training set. To yield better performance, ELR \cite{liu2020early} and SELF \cite{nguyen2020self} use temporal ensembling \cite{laine2016temporal} solely based on model predictions to approximate the target $\bm{t}$. However, it may lose the information of the original training set, and the model predictions can be ambiguous in the early stage of training. Instead, we desire to correct the noisy labels and develop a strategy to estimate the target by utilizing the noisy label $\bm{\hat{y}}$, model prediction $\bm{p}$ and confidence value $\tau$. In each epoch, the target $\bm{t}^{[i]}$ of given $\bm{x}^{[i]}$ is updated by
\begin{align}
	\bm{t}^{[i]}=\left\{ \begin{array}{ll}
		\bm{\hat{y}}^{[i]} & \textrm{if } E < E_{c}\\
		\alpha \bm{t}^{[i]} + (1-\alpha) \bm{p}^{[i]} & \textrm{if $E \ge E_{c}$ and $\tau^{[i]} \ge \delta$}  \\
		\bm{t}^{[i]} & \textrm{otherwise},\\
	\end{array} \right. 
\end{align}
where $E$ is the current epoch number, $E_{c}$ is the epoch that starts performing target estimation and $0 \le \alpha < 1$ is the momentum. Performance is robust to the value of $E_{c}$. We fix the $E_{c}=60$ by default. Threshold $\delta$ is used to exclude ambiguous predictions with low confidence. Thus, our strategy enhances the stability of model predictions and gradually corrects the noisy labels. We evaluate the performance of CAR and CE with different target estimation strategies in Appendix \ref{apd:dif_stra}.


\subsection{Theoretical analysis}
\label{sec:theore}

\subsubsection{Gradient analysis}
\label{sec:grad_analysis}
For sample-wise analysis, we denote the true label of sample $\bm{x}$ as $y \in\{1,...,K\}$. The ground-truth distribution over labels for sample $\bm{x}$ is $q(y|\bm{x})$, and $\sum_{k=1}^{K}q(k|\bm{x})=1$. Consider the case of a single ground-truth label $y$, then $q(y|\bm{x})=1$ and $q(k|\bm{x})=0$ for all $k\ne y$. We denote the prediction probability as $p(k|\bm{x})$ and $\sum_{k=1}^{K}p(k|\bm{x})=1$. For notation simplicity, we denote $p_{k}$, $q_{k}$, $p_{y}$, $q_{y}$, $p_{j}$, $q_{j}$ as abbreviations for $p(k|\bm{x})$, $q(k|\bm{x})$, $p(y|\bm{x})$, $q(y|\bm{x})$, $p(j|\bm{x})$ and $q(j|\bm{x})$. Besides, we assume no target estimation is performed in the following analysis.

We first explain how the cross-entropy loss $\mathcal{L}_{ce}$ (Eq. (\ref{eq:ce})) fails in noisy-label scenario. The gradient of sample-wise cross entropy loss $\mathcal{L}_{ce}$ with respect to $z_{j}$ is 
\begin{align}
	\label{eq:Dce}
	\frac{\partial \mathcal{L}_{ce}}{\partial z_{j}}=\left\{ \begin{array}{ll}
		p_{j}-1 \le 0, & q_{j}=q_{y}=1\\\\
		p_{j} \ge 0, & q_{j}=0  \\
	\end{array} \right. 
\end{align}
In the noisy label scenario, if $j$ is the true class and equals $y$, but $q_{j}=0$ due to the label noise, the contribution of $\bm{x}$ to the gradient is reversed. The entry corresponding to the impostor class $j'$, is also reversed because $q_{j'}=1$, causing the gradient of mislabeled samples dominates (in Figure \ref{fig:gradient} (a) and (b)). Thus, performing stochastic gradient descent eventually results in memorization of the mislabeled samples.

\begin{lemma}
	\label{lamma1}
	For the loss function $\mathcal{L}_\text{CAL}$ given in Eq. (\ref{eq:Lcal}) and $\mathcal{L}_\text{CAR}$ in Eq. (\ref{eq:car}), the gradient of sample-wise $\mathcal{L}_\text{CAL}$ and $\mathcal{L}_\text{CAR}$ ($\beta=1$) with respect to the logits $z_{j}$ can be derived as
	\begin{subnumcases}{\label{eq:dcal} \frac{\partial \mathcal{L}_\textrm{CAL}}{\partial z_{j}}=}
		(p_{j}-1)\frac{p_{j}}{p_{j}-1+1/\tau} \le 0, & $q_{j}=q_{y}=1\  (\text{j is the true class for } \bm{x})$ \label{eq:dcal_isj} \\
		p_{j}\frac{p_{y}}{p_{y}-1+1/\tau}\ge 0, & $q_{j}=0 \  (\text{j is not the true class for }\bm{x})$  \label{eq:dcal_isnotj}
	\end{subnumcases}
	and
	\begin{subnumcases}{\label{eq:dcar} \frac{\partial \mathcal{L}_\textrm{CAR}}{\partial z_{j}}=}
		(p_{j}-1) \frac{ p_{j}}{p_{j}-1 +1/\tau}-A\tau p_{j}(p_{j}-1) \le 0, & $q_{j}=q_{y}=1$ \label{eq:dcar_isj}\\
		p_{j}\frac{p_{y}}{p_{y}-1+1/\tau}-A\tau p_{j}p_{y} \ge 0 , & $q_{j}=0$ \label{eq:dcar_isnotj}
	\end{subnumcases}
	respectively, where $A$ is a negative constant defined in Section \ref{sec:car}.
\end{lemma}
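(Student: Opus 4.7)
My plan is to reduce both loss functions to sample-wise expressions that depend on the logits only through the softmax probability $p_y$ of the labeled class, and then apply the chain rule. I first note that the indicator $\tau$ is produced by a separate head acting on the shared penultimate activation $H$, so $\partial \tau/\partial z_j = 0$ and $\tau$ may be treated as a constant in derivatives with respect to $z_j$. I will also use the standard softmax identity $\partial p_k/\partial z_j = p_k(\mathbbm{1}[j=k] - p_j)$, instantiated in the two cases $j=y$ and $j\neq y$ that appear in the statement.

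With $q_y = 1$ and $q_k = 0$ for $k \neq y$ (no target estimation), the sample-wise $\mathcal{L}_{cace}$ collapses to $-\log\bigl(1 - \tau(1-p_y)\bigr)$, since every other term in the sum is killed by $q_k = 0$. Differentiating and applying the softmax identity yields $\partial \mathcal{L}_{cace}/\partial z_j = \tau p_y(p_j - \mathbbm{1}[j=y])/\bigl(1 - \tau(1-p_y)\bigr)$. Multiplying top and bottom by $1/\tau$ produces the common factor $p_y/(p_y - 1 + 1/\tau)$, and specializing to $j = y$ (where $p_j = p_y$ and the indicator contributes $-1$) and to $j \neq y$ yields (\ref{eq:dcal_isj}) and (\ref{eq:dcal_isnotj}) respectively. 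Because $\mathcal{L}_p = -\log\tau$ has no $z_j$ dependence, this is also the gradient of $\mathcal{L}_\text{CAL}$.

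For the auxiliary regularizer I invoke the convention $\log 0 = A$ from Section \ref{sec:car}. The one-hot structure of $q$ kills the $k = y$ term in $\mathcal{L}_{r\text{-}cace}$ (since $\log 1 = 0$) and leaves $\mathcal{L}_{r\text{-}cace} = -\sum_{k \neq y} \tau p_k \cdot A = -A\tau(1 - p_y)$. Differentiating in $z_j$ gives $-A\tau p_y(p_j - \mathbbm{1}[j=y])$, namely $-A\tau p_y(p_y - 1)$ for $j = y$ and $-A\tau p_y p_j$ for $j \neq y$. Adding $\beta = 1$ times this to the CAL gradient recovers (\ref{eq:dcar_isj}) and (\ref{eq:dcar_isnotj}).

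Finally, the inequalities follow from elementary sign analysis. Since $\tau \in (0,1)$ we have $p_j - 1 + 1/\tau > p_j \ge 0$, so the common factor $p_y/(p_y - 1 + 1/\tau)$ is nonnegative; combined with $(p_j - 1) \le 0$ in the $j = y$ case this gives nonpositivity, and combined with $p_j \ge 0$ in the $j \neq y$ case it gives nonnegativity. The auxiliary contribution preserves the correct sign because $-A > 0$. The only subtle point is checking that the $\log 0 = A$ clipping slots cleanly into the derivative of $\mathcal{L}_{r\text{-}cace}$ (i.e., that one may legitimately differentiate the resulting affine expression in $p_y$); everything else is routine chain-rule algebra.
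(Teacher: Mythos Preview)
Your proposal is correct and follows essentially the same argument as the paper: both rely on the softmax derivative identity, the one-hot structure of $q$, and the fact that $\tau$ is constant in $z_j$. The only cosmetic difference is that you collapse the sum to a scalar function of $p_y$ \emph{before} differentiating, whereas the paper differentiates the full sum and then specializes; your ordering is slightly cleaner but not substantively different.
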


The proof of Lemma \ref{lamma1} is based on gradient derivation in two cases. We defer it in Appendix \ref{apd:proof}.


\begin{figure}[t]
	\begin{center}
		\includegraphics[width=1.0\linewidth]{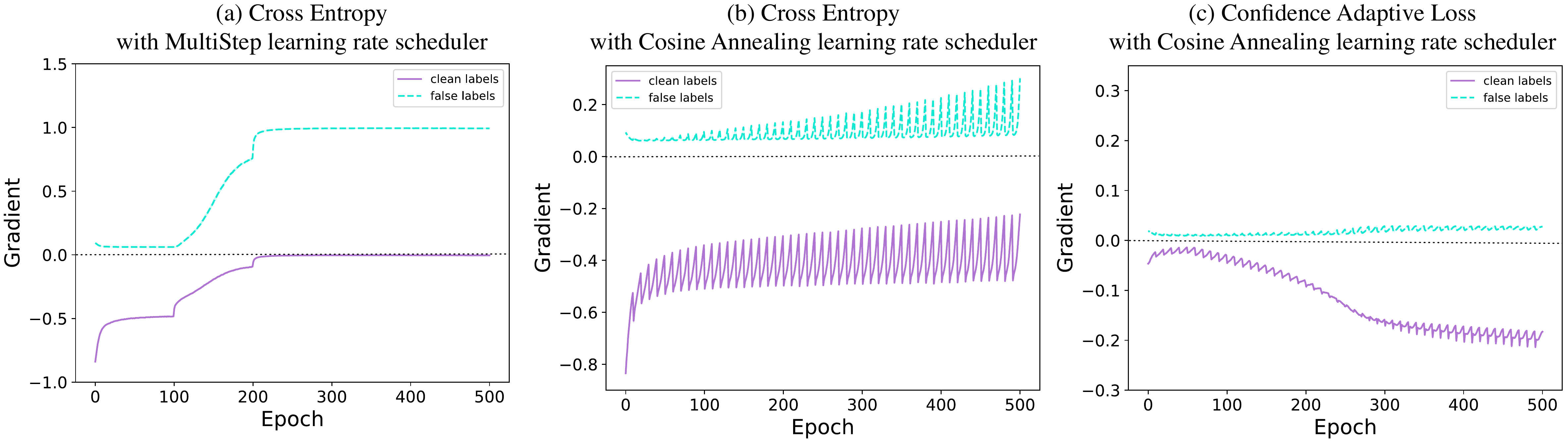}
	\end{center}
	\caption{On CIFAR-10 with 40\% symmetric label noise using ResNet34, we observe that in (a),  the gradient of clean labels dominates in early learning stage, but afterwards it vanishes and the gradient of false labels dominates. In (b), it only slows down this effect with cosine annealing learning rate scheduler. In (c), CAL effectively keeps the gradient of clean labels dominant and diminishes the gradient of false labels when epoch increases, preventing memorization of mislabeled samples.}
	\label{fig:gradient}
\end{figure}

\textbf{Gradient of $\mathcal{L}_\text{CAL}$ in Eq. (\ref{eq:dcal})}. Compared to the gradient of $\mathcal{L}_{ce}$ in Eq. (\ref{eq:Dce}), the gradient of $\mathcal{L}_\text{CAL}$ has an adaptive multiplier. We denote $Q=\frac{p_{j}}{p_{j}-1 +1/\tau}$. It is monotonically increasing on $\tau$ and $p_{j}$. We have $\lim_{\tau \rightarrow 1} Q = 1$, and $\lim_{\tau \rightarrow 0} Q = 0$. For the samples with the true class $j$ in Eq. (\ref{eq:dcal_isj}), the cross entropy gradient term $p_{j}-1$ of correctly-labeled samples tends to vanish after early learning stage because their $p_{j}$ is close to $q_{j}=1$, leading mislabeled samples to dominate the gradient. However, by multiplying $Q$ (note that $Q \rightarrow 0$ for mislabeled samples and $Q \rightarrow 1$ for correctly-labeled samples due to property of $\tau$ as we discussed in Section \ref{sec:cal}), it counteracts the effect of gradient dominating by mislabeled samples. For the samples that $j$ is not the true class in Eq. (\ref{eq:dcal_isnotj}), the gradient term $p_{j}$ is positive. Multiplying $Q<1$ effectively dampens the magnitudes of coefficients on these mislabeled samples, thereby diminishing their effect on the gradient (in Figure \ref{fig:gradient} (c)). 

\textbf{Gradient of $\mathcal{L}_\text{CAR}$ in Eq. (\ref{eq:dcar})}. Compared to the gradient of $\mathcal{L}_\text{CAL}$, an extra term derived from auxiliary regularization term $\mathcal{L}_{r\text{-}cace}$ is added. In the case of $q_{j}=q_{y}=1$ in Eq. (\ref{eq:dcar_isj}), the extra term $-A\tau p_{j}(p_{j}-1) < 0$ for $0 \le p_{j} \le 1$ and it is a convex quadratic function whose vertex is at $p_{j}=0.5$. It means the extra term $-A\tau p_{j}(p_{j}-1)$ provides the largest acceleration in learning around $p_{j}=0.5$ where the most ambiguous scenario occurs. Intuitively, the term $-A\tau p_{j}(p_{j}-1)$ pushes apart the peaks of $\tau$ distribution for correctly-labeled samples and mislabeled samples. In the case of $q_{j}=0$ in Eq. (\ref{eq:dcar_isnotj}), the extra term $-A\tau p_{j}p_{y} > 0$ is added. For correctly-labeled samples, $p_{y}$ is larger, adding $-A\tau p_{j}p_{y}$ leads the residual probabilities of other unlabeled classes reduce faster. For mislabeled samples, $p_{y}$ is close to 0, no acceleration needed. Overall, adding $\mathcal{L}_{r\text{-}cace}$ amplifies the effect of confidence learning in CAL, resulting in the confidence values of mislabeled samples become smaller. The empirical results of the influence of confidence distribution on CIFAR-100 with different strengths of $\mathcal{L}_{r\text{-}cace}$ are shown in Figure \ref{fig:effect_of_CAR}. 

\begin{figure}[t]
	\begin{center}
		\includegraphics[width=1.0\linewidth]{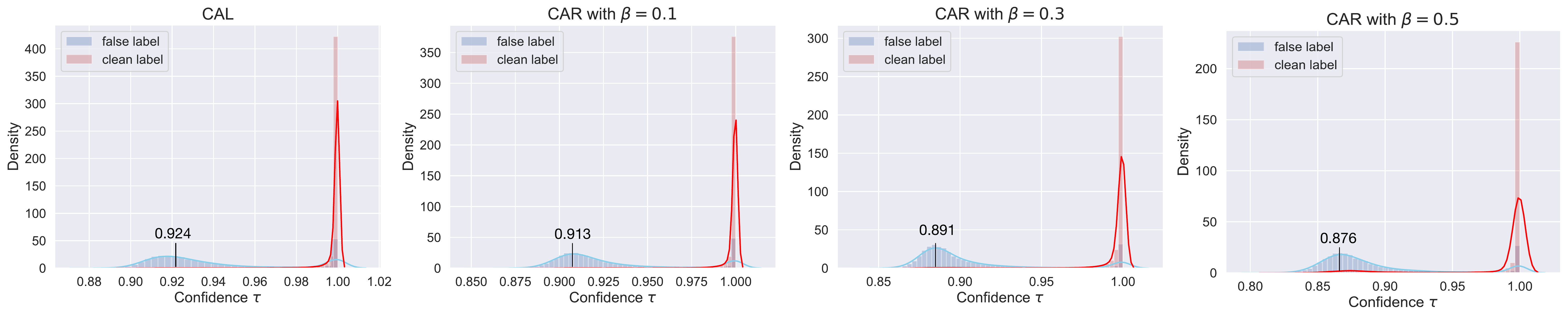}
	\end{center}
	\caption{The empirical density of confidence value $\tau$ on CIFAR-100 with 40\% symmetric label noise. The mean confidence values of mislabeled samples become smaller with the increasing of $\beta$.}
	\label{fig:effect_of_CAR}
	\vspace{-0.5em}
\end{figure}

\subsubsection{Label noise robustness}
Here we prove that the $\mathcal{L}_{r\textrm{-}cace}$ is robust to label noise following \cite{ghosh2017robust}. Recall that noisy label of $\bm{x}$ is $\hat{y} \in \{1,...,K\}$ and its true label is $y \in\{1,...,K\}$. We assume that the noisy sample $(\bm{x},\hat{y})$ is drawn from distribution $\mathcal{D}_{\eta}(\bm{x},\hat{y})$, and the ordinary sample $(\bm{x},y)$ is drawn from $\mathcal{D}(\bm{x},y)$. We have  $\hat{y} = i ( y = i)$ with probability $\eta_{ii}=(1-\eta)$ and $\hat{y}=j (y=i)$ with probability $\eta_{ij}$ for all $j\ne i$ and $\sum_{j\ne i}\eta_{ij}=\eta$. If $\eta_{ij}=\frac{\eta}{K-1}$ for all $j\ne i$, then the noise is \emph{uniform} or \emph{symmetric}, otherwise, the noise is \emph{class-conditional} or \emph{asymmetric}. Given any classifier $f$ and loss function $\mathcal{L}$, we define the risk of $f$ under clean labels as $\mathcal{R}_{\mathcal{L}}(f)=\mathbb{E}_{\mathcal{D}(\bm{x},y)}[\mathcal{L}(f(\bm{x},y))]$, and the risk under label noise rate $\eta$ as $\mathcal{R}^{\eta}_{\mathcal{L}}(f)=\mathbb{E}_{\mathcal{D}(\bm{x},\hat{y})}[\mathcal{L}(f(\bm{x},\hat{y}))]$. Let $f^{*}$ and $f^{*}_{\eta}$ be the global minimizers of $\mathcal{R}_{\mathcal{L}}(f)$ and $\mathcal{R}^{\eta}_{\mathcal{L}}(f)$ respectively. Then, the empirical risk minimization under loss function $\mathcal{L}$ is defined to be \emph{noise-tolerant} if $f^{*}$ is a global minimum of the noisy risk $\mathcal{R}^{\eta}_{\mathcal{L}}(f)$.

\begin{theorem}
	\label{theroem:1}
	Under symmetric or uniform label noise with noise rate $\eta < \frac{K-1}{K}$, we have  
	\begin{align}
		0\le \mathcal{R}_{\mathcal{L}_{r\textrm{-}cace}}(f^{*}_{\eta}) - \mathcal{R}_{\mathcal{L}_{r\textrm{-}cace}}(f^{*}) < \frac{-A\eta (K-1)}{K(1-\eta)-1} 
	\end{align}
	and 
	\begin{align}
			A\eta<\mathcal{R}^{\eta}_{\mathcal{L}_{r\textrm{-}cace}}(f^{*}_{\eta}) - \mathcal{R}^{\eta}_{\mathcal{L}_{r\textrm{-}cace}}(f^{*}) \le 0 
	\end{align}
	where $f^{*}$ and $f^{*}_{\eta}$ be the global minimizers of $\mathcal{R}_{\mathcal{L}_{r\textrm{-}cace}}(f)$ and $\mathcal{R}^{\eta}_{\mathcal{L}_{r\textrm{-}cace}}(f)$ respectively.
\end{theorem}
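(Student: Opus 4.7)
The plan is to adapt the Ghosh--Kumar--Sastry noise tolerance framework to the confidence branch. The first step is a direct algebraic computation: when $\bm{t}$ is the one-hot encoding of class $y$ and we use the convention $\log(0)=A$, the reverse confidence adaptive cross entropy collapses into the closed form
\begin{align}
\mathcal{L}_{r\text{-}cace}(f(\bm{x}),y)= -A\,\tau(\bm{x})\,(1-p_{y}(\bm{x})),
\end{align}
because the $y$-th coordinate of $\log(\bm{t})$ is $0$ while every other coordinate is $A$. The critical consequence is that for any fixed $\bm{x}$, $\sum_{k=1}^{K}\mathcal{L}_{r\text{-}cace}(f(\bm{x}),k) = -A\,\tau(\bm{x})\,(K-1)$, which does not depend on $k$. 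This sum-symmetry is exactly what Ghosh et al.\ require; here the only new feature is that the ``constant'' depends on $\bm{x}$ through $\tau(\bm{x})$.

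Next I would use the symmetric noise model to expand the noisy risk. Conditioning on $y$, summing the contribution $(1-\eta)\mathcal{L}(f(\bm{x}),y) + \tfrac{\eta}{K-1}\sum_{j\ne y}\mathcal{L}(f(\bm{x}),j)$, and substituting the sum-symmetry identity, one obtains the decomposition
\begin{align}
\mathcal{R}^{\eta}_{\mathcal{L}_{r\text{-}cace}}(f) = \frac{K(1-\eta)-1}{K-1}\,\mathcal{R}_{\mathcal{L}_{r\text{-}cace}}(f) \;-\; A\eta\,\mathbb{E}_{\bm{x}}[\tau(\bm{x})].
\end{align}
Under $\eta<(K-1)/K$ the leading coefficient is strictly positive. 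Unlike the classical setting, this is \emph{not} a pure affine transformation of the clean risk, because the additive term depends on $f$ through $\tau$; this is where the theorem gains error bars rather than the exact identity $f^{*}_{\eta}=f^{*}$.

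The two inequalities now follow from optimality together with the bound $0<\tau(\bm{x})<1$ (guaranteed by the sigmoid), which implies $\mathbb{E}[\tau_{\eta}]-\mathbb{E}[\tau^{*}]\in(-1,1)$. For the first display: apply $\mathcal{R}^{\eta}(f^{*}_{\eta})\le \mathcal{R}^{\eta}(f^{*})$, substitute the decomposition, and divide by the positive coefficient $(K(1-\eta)-1)/(K-1)$; the strict upper bound $A\eta(\mathbb{E}[\tau_{\eta}]-\mathbb{E}[\tau^{*}])<-A\eta$ (using $A<0$) yields the stated ceiling, while the lower bound $0$ is immediate from $f^{*}$ being the clean minimizer. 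For the second display: $\mathcal{R}^{\eta}(f^{*}_{\eta})-\mathcal{R}^{\eta}(f^{*})\le 0$ is just optimality, and substituting the decomposition on the left (first term $\ge 0$, second term $>A\eta$ by the same strict bound) gives the lower bound $A\eta$.

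The main obstacle is conceptual rather than technical: because $\tau$ is itself learned and therefore varies with $f$, the standard ``affine risk'' argument does not give a clean fixed-point statement, and one must track the extra $-A\eta\,\mathbb{E}[\tau]$ term through both optimality comparisons. Once the sigmoid bound $\tau\in(0,1)$ is used to control the difference $\mathbb{E}[\tau_{\eta}]-\mathbb{E}[\tau^{*}]$, the two-sided bounds in the theorem follow mechanically from the decomposition.
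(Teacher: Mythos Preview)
Your proposal is correct and follows essentially the same route as the paper: compute the closed form $\mathcal{L}_{r\text{-}cace}(f(\bm{x}),y)=-A\tau(\bm{x})(1-p_{y}(\bm{x}))$, use the resulting sum identity to decompose $\mathcal{R}^{\eta}$ as a positive multiple of $\mathcal{R}$ plus a $\tau$-dependent offset, then combine optimality of $f^{*}$ and $f^{*}_{\eta}$ with the sigmoid bound $\tau\in(0,1)$. The only cosmetic difference is that the paper packages the bound $0<\sum_{j}\mathcal{L}_{r\text{-}cace}(f(\bm{x}),j)<A(1-K)$ as a separate lemma and applies it before taking differences, whereas you keep the exact term $-A\eta\,\mathbb{E}_{\bm{x}}[\tau(\bm{x})]$ and bound the difference $\mathbb{E}[\tau_{\eta}]-\mathbb{E}[\tau^{*}]\in(-1,1)$ at the end; the two are equivalent.
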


\begin{theorem}
	\label{theorem:2}
	Under class-dependent label noise with $\eta_{ij}<1-\eta_{i},\forall j\ne i,\forall i,j \in [K]$, where $\eta_{ij}=p(\hat{y}=j|y=i), \forall j\ne i$ and $(1-\eta_{i}) = p(\hat{y}=i|y=i)$, if $\mathcal{R}_{\mathcal{L}_{r\text{-}cace}}(f^{*})=0$, then
	\begin{align}
		0\le\mathcal{R}_{\mathcal{L}_{r\text{-}cace}}^{\eta}(f^{*}) - \mathcal{R}_{\mathcal{L}_{r\text{-}cace}}^{\eta}(f^{*}_{\eta}) < G, 
	\end{align}
	where $G=A(1-K) \mathbb{E}_{\mathcal{D}(\bm{x},y)}(1-\eta_{y}) >0 $, $f^{*}$ and $f^{*}_{\eta}$ be the global minimizers of $\mathcal{R}_{\mathcal{L}_{r\textrm{-}cace}}(f)$ and $\mathcal{R}^{\eta}_{\mathcal{L}_{r\textrm{-}cace}}(f)$ respectively. 
	
\end{theorem}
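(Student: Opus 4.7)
The plan is to reduce $\mathcal{L}_{r\textrm{-}cace}$ to an elementary closed form and then run a Ghosh--style noise-tolerance argument tailored to the class-conditional setting, leveraging the hypothesis $\mathcal{R}_{\mathcal{L}_{r\textrm{-}cace}}(f^{*})=0$. First I would expand the per-sample loss when the target is the one-hot noisy label $\bm{t}=\bm{\hat{y}}$. Using the convention $\log(0)=A$ from Section~\ref{sec:car}, the coordinate $k=\hat{y}$ contributes $0$ to the inner product and each $k\neq \hat{y}$ contributes $A\tau p_{k}$, so the per-sample loss collapses to $-A\tau(\bm{x})(1-p_{\hat{y}}(\bm{x}))$, which is non-negative since $-A>0$, $\tau\in(0,1)$ and $1-p_{\hat{y}}\in[0,1]$. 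Writing $L(f,k):=-A\tau(\bm{x})(1-p_{k}(\bm{x}))$, the class-conditional noisy risk reads
\begin{align*}
\mathcal{R}^{\eta}_{\mathcal{L}_{r\textrm{-}cace}}(f) \;=\; \mathbb{E}_{\mathcal{D}(\bm{x},y)}\Bigl[(1-\eta_{y})\,L(f,y) + \sum_{j\neq y}\eta_{yj}\,L(f,j)\Bigr].
\end{align*}

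Next I would exploit $\mathcal{R}_{\mathcal{L}_{r\textrm{-}cace}}(f^{*})=0$. Since the integrand is non-negative, $\tau^{*}(\bm{x})(1-p^{*}_{y}(\bm{x}))=0$ almost surely, so either $\tau^{*}(\bm{x})=0$ or $p^{*}_{y}(\bm{x})=1$; the latter forces $p^{*}_{j}(\bm{x})=0$ for $j\neq y$. In both branches $L(f^{*},y)=0$ and the identity $L(f^{*},j)=-A\tau^{*}(\bm{x})$ holds uniformly for $j\neq y$ (trivially $0=-A\cdot 0$ in the first branch, and $-A\tau^{*}(\bm{x})(1-0)$ in the second). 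Substituting and using $\sum_{j\neq y}\eta_{yj}=\eta_{y}$ yields the clean identity
\begin{align*}
\mathcal{R}^{\eta}_{\mathcal{L}_{r\textrm{-}cace}}(f^{*}) \;=\; -A\,\mathbb{E}_{\mathcal{D}(\bm{x},y)}\bigl[\eta_{y}\,\tau^{*}(\bm{x})\bigr].
\end{align*}

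Finally I would assemble the two bounds. The lower bound $\mathcal{R}^{\eta}(f^{*})-\mathcal{R}^{\eta}(f^{*}_{\eta})\ge 0$ is immediate from $f^{*}_{\eta}$ minimizing $\mathcal{R}^{\eta}$. For the upper bound, non-negativity of $L$ gives $\mathcal{R}^{\eta}(f^{*}_{\eta})\ge 0$, so the gap is at most $\mathcal{R}^{\eta}(f^{*})$; then $\tau^{*}\in(0,1)$ yields $-A\mathbb{E}[\eta_{y}\tau^{*}]\le -A\mathbb{E}[\eta_{y}]$; and summing the hypothesis $\eta_{yj}<1-\eta_{y}$ over the $K-1$ competing classes gives $\eta_{y}<(K-1)(1-\eta_{y})$, so $-A\mathbb{E}[\eta_{y}]<-A(K-1)\mathbb{E}[1-\eta_{y}]=A(1-K)\mathbb{E}[1-\eta_{y}]=G$, as claimed.

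The main obstacle is the step that converts the global hypothesis $\mathcal{R}(f^{*})=0$ into a pointwise structural identity for $f^{*}$: one must verify that $L(f^{*},j)=-A\tau^{*}(\bm{x})$ for $j\neq y$ holds in \emph{both} branches ($\tau^{*}=0$ and $p^{*}_{y}=1$), so that substitution into $\mathcal{R}^{\eta}(f^{*})$ is valid everywhere. After that, the remainder is bookkeeping of non-negativity, the sigmoid bound $\tau^{*}<1$, and the summed class-conditional inequality $\eta_{y}<(K-1)(1-\eta_{y})$ inherited from the hypothesis.
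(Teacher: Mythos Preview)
Your argument is correct, and it is genuinely different from the paper's. The paper follows the classical Ghosh-style template: it rewrites $(1-\eta_{y})\mathcal{L}_{r\textrm{-}cace}(f(\bm{x}),y)$ as $(1-\eta_{y})\bigl(\sum_{j}\mathcal{L}_{r\textrm{-}cace}(f(\bm{x}),j)-\sum_{j\neq y}\mathcal{L}_{r\textrm{-}cace}(f(\bm{x}),j)\bigr)$, invokes Lemma~\ref{lamma2} to sandwich $\sum_{j}\mathcal{L}_{r\textrm{-}cace}(f(\bm{x}),j)$ between $0$ and $A(1-K)$, obtains two-sided bounds on $\mathcal{R}^{\eta}(f)$ valid for \emph{every} $f$, subtracts the $f^{*}$ and $f^{*}_{\eta}$ versions, and only then uses the zero-risk hypothesis to show the residual $\sum_{j\neq y}(1-\eta_{y}-\eta_{yj})\bigl(\mathcal{L}_{r\textrm{-}cace}(f^{*}_{\eta}(\bm{x}),j)-\mathcal{L}_{r\textrm{-}cace}(f^{*}(\bm{x}),j)\bigr)\le 0$. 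You instead compute $\mathcal{R}^{\eta}(f^{*})$ \emph{exactly} as $-A\,\mathbb{E}[\eta_{y}\tau^{*}]$ from the pointwise consequences of $\mathcal{R}(f^{*})=0$, drop $\mathcal{R}^{\eta}(f^{*}_{\eta})$ by non-negativity of the loss, and finish with $\tau^{*}<1$ and the summed hypothesis $\eta_{y}<(K-1)(1-\eta_{y})$. This is shorter, does not need Lemma~\ref{lamma2} as a separate ingredient, and avoids the paper's final comparison $\mathcal{L}_{r\textrm{-}cace}(f^{*}_{\eta}(\bm{x}),j)\le -A\tau$ versus $\mathcal{L}_{r\textrm{-}cace}(f^{*}(\bm{x}),j)=-A\tau$ (where the two $\tau$'s belong to different classifiers). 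One minor remark: since $\tau=\mathtt{sigmoid}(h)\in(0,1)$ strictly, the branch $\tau^{*}(\bm{x})=0$ in your case analysis is actually vacuous and only $p^{*}_{y}(\bm{x})=1$ can realize $\mathcal{R}(f^{*})=0$; your handling of both cases does no harm but can be trimmed.
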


Due to the space constraints, we defer the proof of Theorem \ref{theroem:1} and Theorem \ref{theorem:2} to the Appendix \ref{apd:proof}. Theorem \ref{theroem:1} and Theorem \ref{theorem:2} ensure that by minimizing $\mathcal{L}_{r\textrm{-}cace}$ under symmetric and asymmetric label noise, the difference of the risks caused by the derived hypotheses $f^{*}_{\eta}$ and $f^{*}$ are always bounded. The bounds are related to the negative constant $A$. Since $A$ is the approximate of $\log(0)$ which is actually $-\infty$. A larger $A$ (closer to 0) leads to a tighter bound but introduces a larger approximation error in implementation. A reasonable $A$ we set is -4 in our experiments. We also compare $\mathcal{L}_{r\textrm{-}cace}$ with existing noise-robust loss functions in Appendix \ref{sec:sample_wise_loss}. 

\section{Experiments}

\textbf{Comparison with the state-of-the-art methods} We evaluate our approach on two benchmark datasets with simulated label noise, CIFAR-10 and CIFAR-100 \cite{krizhevsky2009learning}, and two real-world datasets, Clothing1M \cite{xiao2015learning} and WebVision \cite{li2017webvision}. More information of datasets, data preprocessing, label noise injection and training details can be found in Appendix \ref{sec:details_of_exp}. 

Table \ref{table:cifar10and100_resnet34} shows the performance of CAR on CIFAR-10 and CIFAR-100 with different levels of symmetric and asymmetric label noise. All methods use the same backbone (ResNet34). We compare CAR to the state-of-the-art approaches that only modify the training loss without extra regularization techniques, such as mixup data augmentation, two networks, and weight averaging. CAR obtains the highest performance in most cases and achieves comparable results in the most challenging cases (e.g. under 80\% symmetric noise).

Table \ref{table:clothing1m} compares CAR to state-of-the-art methods trained on the Clothing1M dataset. Note that DivideMix and ELR+ require mixup data augmentation, two networks, and weight averaging, while CAR is a pure regularization method. Except for DivideMix and ELR+, CAR slightly outperforms other methods.

\begin{table}
	\caption{Test Accuracy (\%) on CIFAR-10 and CIFAR-100 with various levels of label noise injected to the training set. We compare with previous works under the same backbone ResNet34. The results are averaged over 3 trials. Results are taken from their original papers. The best results are in \textbf{bold}. \emph{Note that SAT \cite{huang2020self}, ELR \cite{liu2020early} and CAR use cosine annealing learning rate scheduler \cite{loshchilov2016sgdr}.} } 
	\begin{center}
		\resizebox{1.0\textwidth}{!}{
			\centering
			\begin{tabular}{ p{25mm} c c c  c c cc c c c c } 
				\toprule
				\multicolumn{2}{c}{\multirow{1}{*}{Dataset}} & \multicolumn{5}{c}{CIFAR-10} &\multicolumn{5}{c}{CIFAR-100}\\ \cmidrule(lr){3-7} \cmidrule(lr){8-12}
				\multicolumn{2}{c}{\multirow{1}{*}{Noise type}} & \multicolumn{4}{c}{symm} &\multicolumn{1}{c}{asymm} & \multicolumn{4}{c}{symm}&\multicolumn{1}{c}{asymm} \\   \cmidrule(lr){3-6} \cmidrule(lr){7-7} \cmidrule(lr){8-11} \cmidrule(lr){12-12}
				\multicolumn{2}{c}{\multirow{1}{*}{Method/Noise ratio}}& 20\% & \multicolumn{1}{c}{40\%}&\multicolumn{1}{c}{60\%}&\multicolumn{1}{c}{80\%}& \multicolumn{1}{c}{40\%} & \multicolumn{1}{c}{20\%}&\multicolumn{1}{c}{40\%}&\multicolumn{1}{c}{60\%}&\multicolumn{1}{c}{80\%}&\multicolumn{1}{c}{40\%}\\
				\midrule
				\multirow{1}{*}{Cross Entropy} &  & 86.98 $\pm$ 0.12 &81.88 $\pm$ 0.29&74.14 $\pm$ 0.56&53.82 $\pm$ 1.04&80.11 $\pm$ 1.44&58.72 $\pm$ 0.26&48.20 $\pm$ 0.65&37.41 $\pm$ 0.94&18.10 $\pm$ 0.82&42.74 $\pm$ 0.61\\ 
				\multirow{1}{*}{Forward $\hat{T}$ \cite{patrini2017making}} && 87.99 $\pm$ 0.36 &83.25 $\pm$ 0.38&74.96 $\pm$ 0.65&54.64 $\pm$ 0.44&83.55 $\pm$ 0.58&39.19 $\pm$ 2.61&31.05 $\pm$ 1.44&19.12 $\pm$ 1.95&8.99 $\pm$ 0.58&34.44 $\pm$ 1.93\\ 
				
				\multirow{1}{*}{Bootstrap \cite{reed2014training}} && 86.23 $\pm$ 0.23 &82.23 $\pm$ 0.37&75.12 $\pm$ 0.56&54.12 $\pm$ 1.32&81.21 $\pm$ 1.47&58.27 $\pm$ 0.21&47.66 $\pm$ 0.55&34.68 $\pm$ 1.10&21.64 $\pm$ 0.97&45.12 $\pm$ 0.57\\ 
				
				\multirow{1}{*}{GCE \cite{zhang2018generalized}} & &89.83 $\pm$ 0.20 &87.13 $\pm$ 0.22&82.54 $\pm$ 0.23&64.07 $\pm$ 1.38&76.74 $\pm$ 0.61&66.81 $\pm$ 0.42&61.77 $\pm$ 0.24&53.16 $\pm$ 0.78&29.16 $\pm$ 0.74&47.22 $\pm$ 1.15\\ 
				\multirow{1}{*}{Joint Opt \cite{tanaka2018joint}} & & 92.25 &90.79&86.87&69.16&-&58.15&54.81&47.94&17.18&-\\
				\multirow{1}{*}{NLNL \cite{kim2019nlnl}} & & 94.23 &92.43&88.32&-&89.86&71.52&66.39&56.51&-&45.70\\
				\multirow{1}{*}{SL \cite{wang2019symmetric}} & & 89.83 $\pm$ 0.20 &87.13 $\pm$ 0.26&82.81 $\pm$ 0.61&68.12 $\pm$ 0.81&82.51 $\pm$ 0.45&70.38 $\pm$ 0.13&62.27 $\pm$ 0.22&54.82 $\pm$ 0.57&25.91 $\pm$ 0.44&69.32 $\pm$ 0.87\\
				\multirow{1}{*}{DAC \cite{thulasidasan2019combating}} & & 92.91 &90.71&86.30&74.84&-&73.55&66.92&57.17&32.16&-\\
				\multirow{1}{*}{SELF \cite{nguyen2020self}} & & - &91.13&-&63.59&-&-&66.71&-&35.56&-\\
				\multirow{1}{*}{SAT \cite{huang2020self}} & & 94.14&92.64&89.23&78.58&-&75.77&71.38&62.69&\textbf{38.72}&-\\
				
				\multirow{1}{*}{ELR \cite{liu2020early}} & & 92.12 $\pm$ 0.35&91.43 $\pm$ 0.21&88.87 $\pm$ 0.24&\textbf{80.69} $\pm$ \textbf{0.57}&90.35 $\pm$ 0.38&74.68 $\pm$ 0.31&68.43 $\pm$ 0.42&60.05 $\pm$ 0.78&30.27 $\pm$ 0.86&73.73 $\pm$ 0.34\\
				\multirow{1}{*}{CAR (Ours)}& &\textbf{94.37} $\pm$ \textbf{0.04}&\textbf{93.49} $\pm$ \textbf{0.07}&\textbf{90.56} $\pm$ \textbf{0.07} &\textbf{80.98} $\pm$ \textbf{0.27}& \textbf{92.09} $\pm$ \textbf{0.12}& \textbf{77.90} $\pm$ \textbf{0.14}&\textbf{75.38} $\pm$ \textbf{0.08}&\textbf{69.78} $\pm$ \textbf{0.69}&\textbf{38.24} $\pm$ \textbf{0.55}&\textbf{74.89} $\pm$ \textbf{0.20} \\ 
				\bottomrule

			\end{tabular}
		}
	\end{center}
	\label{table:cifar10and100_resnet34}
\end{table}

\begin{table*}
	\caption{Comparison with state-of-the-art methods trained on Clothing1M. Results of other methods are taken from original papers. All methods use an ResNet-50 architecture pretrained on ImageNet. }
	\begin{center}
		\resizebox{1.0\textwidth}{!}{
			\centering
			\begin{tabular}{  c c  c c c c c c c c c } 
				\toprule
				\multicolumn{1}{c}{CE}&\multicolumn{1}{c}{Forward \cite{patrini2017making}}&\multicolumn{1}{c}{GCE \cite{zhang2018generalized}} &\multicolumn{1}{c}{SL \cite{wang2019symmetric}}&\multicolumn{1}{c}{Joint-Optim \cite{tanaka2018joint}}&\multicolumn{1}{c}{DMI \cite{xu2019l_dmi}}&\multicolumn{1}{c}{ELR \cite{liu2020early}}&\multicolumn{1}{c}{ELR+ \cite{liu2020early}}&\multicolumn{1}{c}{DivideMix \cite{li2020dividemix}}&\multicolumn{1}{c}{CAR}\\
				\midrule
				 \multicolumn{1}{c}{69.21}&\multicolumn{1}{c}{69.84}&\multicolumn{1}{c}{69.75} &\multicolumn{1}{c}{71.02} & \multicolumn{1}{c}{72.16}&\multicolumn{1}{c}{72.46}&\multicolumn{1}{c}{72.87}&\multicolumn{1}{c}{\textbf{74.81}}&\multicolumn{1}{c}{74.76}&\multicolumn{1}{c}{73.19} \\
				\bottomrule

			\end{tabular}
		}
	\end{center}
	\label{table:clothing1m}
	\vspace{-0.8em}
\end{table*}

\begin{table*}[htb!]
	\caption{Comparison with state-of-the-art methods trained on mini WebVision. Results of other methods are taken from \cite{li2020dividemix,liu2020early}. All methods use an InceptionResNetV2 architecture. }
	\begin{center}
		\resizebox{1.0\textwidth}{!}{
			\centering
			\begin{tabular}{ p{25mm} c c c  c c c c c c c } 
				\toprule
				\multicolumn{3}{c}{\multirow{1}{*}{}} & \multicolumn{1}{c}{D2L \cite{ma2018dimensionality}}&\multicolumn{1}{c}{MentorNet \cite{jiang2017mentornet}}&\multicolumn{1}{c}{Co-teaching \cite{han2018co}} &\multicolumn{1}{c}{Iterative-CV \cite{chen2019understanding}}&\multicolumn{1}{c}{ELR \cite{liu2020early}}&\multicolumn{1}{c}{DivideMix \cite{li2020dividemix}}&\multicolumn{1}{c}{ELR+ \cite{liu2020early}}&\multicolumn{1}{c}{CAR}\\\midrule
				\multicolumn{2}{c}{\multirow{2}{*}{WebVision}} & \multicolumn{1}{c}{top1}&\multicolumn{1}{c}{62.68}&\multicolumn{1}{c}{63.00}&\multicolumn{1}{c}{63.58} &\multicolumn{1}{c}{65.24} & \multicolumn{1}{c}{76.26}&\multicolumn{1}{c}{77.32}&\multicolumn{1}{c}{\textbf{77.78}}&\multicolumn{1}{c}{77.41} \\ 
				\multicolumn{2}{c}{}& top5 & \multicolumn{1}{c}{84.00}&\multicolumn{1}{c}{81.40}&\multicolumn{1}{c}{85.20}& 85.34 & \multicolumn{1}{c}{91.26}&\multicolumn{1}{c}{91.64}&\multicolumn{1}{c}{91.68}&\multicolumn{1}{c}{\textbf{92.25}}\\
				\midrule
				\multicolumn{2}{c}{\multirow{2}{*}{ILSVRC12}} & \multicolumn{1}{c}{top1}&\multicolumn{1}{c}{57.80}&\multicolumn{1}{c}{57.80}&\multicolumn{1}{c}{61.48} &\multicolumn{1}{c}{61.60} & \multicolumn{1}{c}{68.71}&\multicolumn{1}{c}{\textbf{75.20}}&\multicolumn{1}{c}{70.29}&\multicolumn{1}{c}{74.09} \\ 
				\multicolumn{2}{c}{}& top5 & \multicolumn{1}{c}{81.36}&\multicolumn{1}{c}{79.92}&\multicolumn{1}{c}{84.70}& 84.98 & \multicolumn{1}{c}{87.84}&\multicolumn{1}{c}{90.84}&\multicolumn{1}{c}{89.76}&\multicolumn{1}{c}{\textbf{92.09}}\\
				\bottomrule

			\end{tabular}
		}
	\end{center}
	\label{table:webvision}
\end{table*}

Table \ref{table:webvision} compares CAR to state-of-the-art methods trained on the mini WebVision dataset and evaluated on both WebVision and ImageNet ILSVRC12 validation sets. On WebVision, CAR outperforms others on top5 accuracy, even better than DivideMix and ELR+. On top1 accuracy, CAR is slightly superior to DivideMix and achieves comparable performance to ELR+. On ILSVRC12, DivideMix achieves superior performance in terms of top1 accuracy, while CAR achieves the best top5 accuracy. We describe the hyperparameters sensitivity of CAR in Appendix \ref{sec:hyper_sensi}.

\textbf{Ablation study} Table \ref{tab:ablation} reports the influence of three individual components in CAR: auxiliary regularization term $\mathcal{L}_{r\textrm{-}cace}$, target estimation and indicator branch. Removing $\mathcal{L}_{r\textrm{-}cace}$ does not hurt the performance on CIFAR-10. However, the term $\mathcal{L}_{r\textrm{-}cace}$ improves the performance on CIFAR-100. The larger the noise is, the more improvement we obtain. Removing the target estimation leads to a significant performance drop. This suggests that estimating the target by properly using model predictions is crucial for avoiding memorization. To validate the effect of adding the indicator branch, we conduct another way to calculate confidence value without using indicator branch: using the highest probability as the confidence value, which means $\tau^{[i]} = \max_{j} \bm{p}^{[i]}_{j}, j\in[1,K]$. Without using the indicator branch, the model only converges in two easy cases. Hence, directly calculating the confidence by model output does interfere with the original prediction branch, while adding an extra indicator branch solves this problem.

\begin{table}
	\caption{Influence of three components in our approach. $\circleddash$ means the model fails to converge.} 
	\begin{center}
		\resizebox{0.8\textwidth}{!}{
			\centering
			\begin{tabular}{ p{20mm} c c  c c c c c } 
				\toprule
				\multicolumn{2}{c}{\multirow{1}{*}{Dataset}} & \multicolumn{3}{c}{CIFAR-10} &\multicolumn{3}{c}{CIFAR-100}\\ \cmidrule(lr){3-5} \cmidrule(lr){6-8}
				\multicolumn{2}{c}{\multirow{1}{*}{Noise type}} & \multicolumn{2}{c}{symm} &\multicolumn{1}{c}{asymm} & \multicolumn{2}{c}{symm}&\multicolumn{1}{c}{asymm} \\   \cmidrule(lr){3-4} \cmidrule(lr){5-5} \cmidrule(lr){6-7} \cmidrule(lr){8-8}
				\multicolumn{2}{c}{\multirow{1}{*}{Noise ratio}}&  \multicolumn{1}{c}{40\%}&\multicolumn{1}{c}{80\%}& \multicolumn{1}{c}{40\%} &\multicolumn{1}{c}{40\%}&\multicolumn{1}{c}{80\%}&\multicolumn{1}{c}{40\%}\\
				\midrule
				
				\multirow{1}{*}{CAR} & &\textbf{93.49} $\pm$ \textbf{0.07} &\textbf{80.98} $\pm$ \textbf{0.27}&\textbf{92.09} $\pm$ \textbf{0.12}&\textbf{75.38} $\pm$ \textbf{0.08}&\textbf{38.24} $\pm$ \textbf{0.55}&\textbf{74.89} $\pm$ \textbf{0.20}\\
				\multirow{1}{*}{-- $\mathcal{L}_{r\textrm{-}cace}$} & &93.49 $\pm$ 0.07 &80.98 $\pm$ 0.27&92.09 $\pm$ 0.12&74.65 $\pm$ 0.09&34.79 $\pm$ 0.71&74.73 $\pm$ 0.12\\
				\multirow{1}{*}{-- target estimation} & &89.47 $\pm$ 0.50&76.91 $\pm$ 0.22&88.23 $\pm$ 0.22&69.91 $\pm$ 0.21&31.33 $\pm$ 0.38&55.68 $\pm$ 0.17\\
				\multirow{1}{*}{-- indicator branch} & &90.94 $\pm$ 0.28&$\circleddash$&91.55 $\pm$ 0.07&$\circleddash$&$\circleddash$&$\circleddash$\\
				
				\bottomrule

			\end{tabular}
		}
	\end{center}
	\label{tab:ablation}
\end{table}

\begin{figure}[t]
	\centering
	\begin{minipage}[t]{0.45\textwidth}
		\centering
		\includegraphics[width=0.8\linewidth]{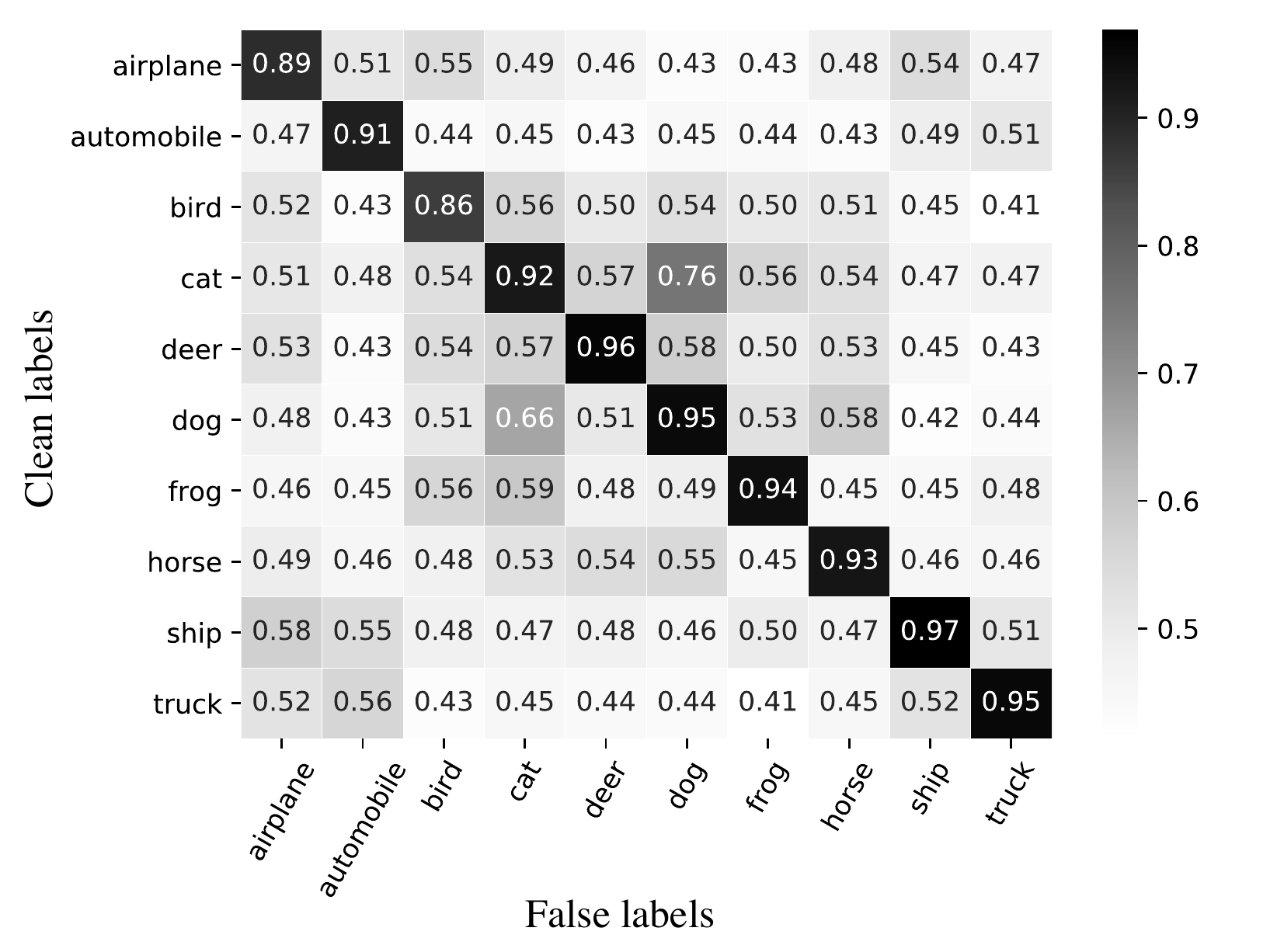}
		\caption{Average confidence values $\tau$ of false labels w.r.t clean labels on CIFAR-10 with 40\% symmetric label noise.}
		\label{fig:confidence_hm}
	\end{minipage}
	\hspace{0.5cm}
	\begin{minipage}[t]{0.45\textwidth}	
		\centering
		\includegraphics[width=0.8\linewidth]{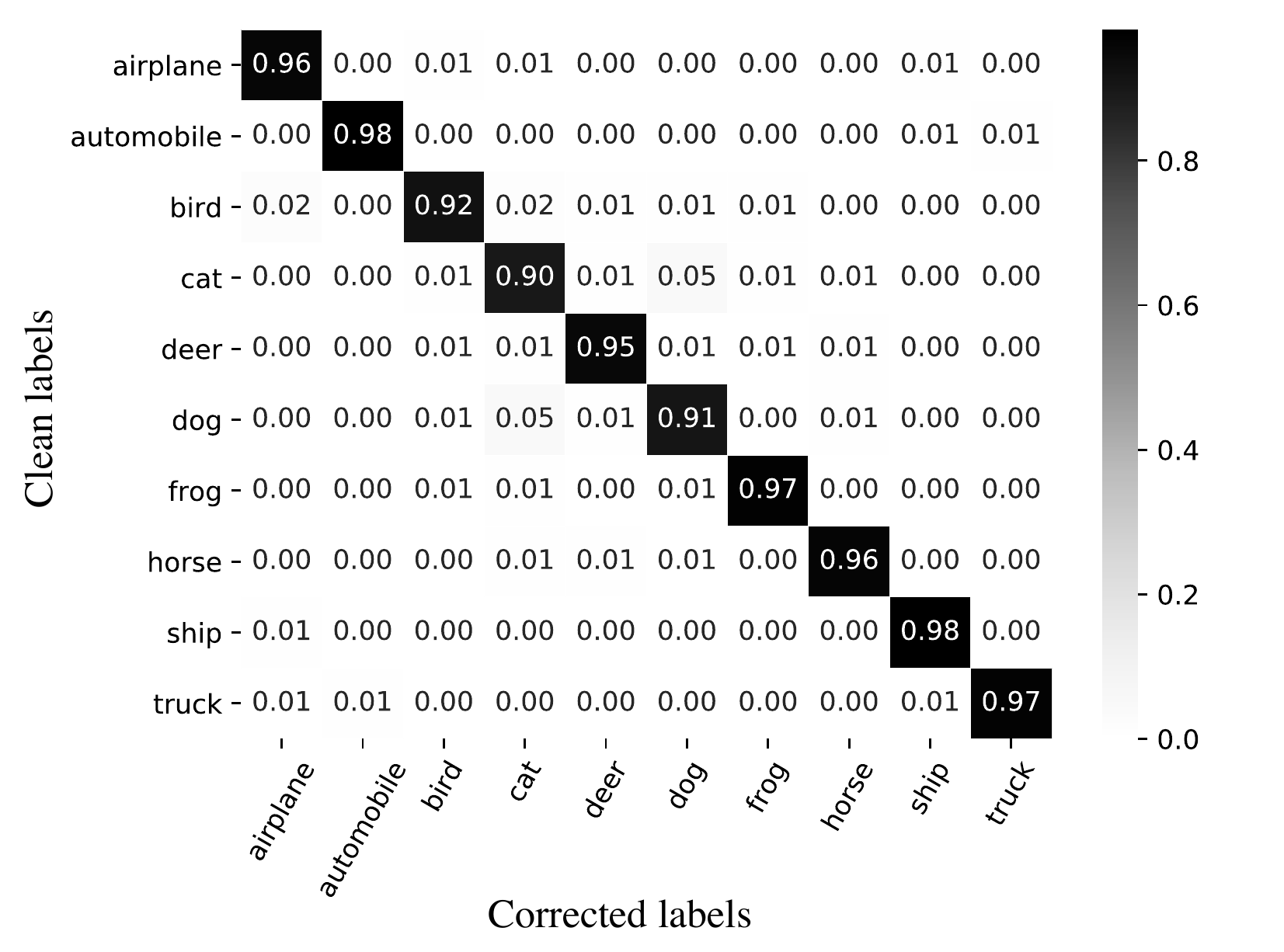}
		\caption{Confusion matrix of corrected labels w.r.t clean labels on CIFAR-10 with 40\% symmetric label noise.}
		\label{fig:label_recover}
	\end{minipage}
\vspace{-0.9em}
\end{figure}



\textbf{Identification of mislabeled samples} When exploiting the progress of the early learning phase by CAL, we have observed that the correctly-labeled samples have larger confidence values than the mislabeled samples. We report the average confidence values of samples in Figure \ref{fig:confidence_hm}. The $(i,j)$-th block represents the average confidence value of samples with clean label $i$ and false label $j$. We observe that the confidence values on the diagonal blocks are higher than those on non-diagonal blocks, which means that the confidence value has an effect similar to the probability of extra class in DAC \cite{thulasidasan2019combating} and AUM \cite{pleiss2020identifying}. The key difference is that DAC and AUM perform two stages of training: identify the mislabeled samples and then drop them to perform classification, while we incorporate the confidence values in loss function and implicitly achieve the regularization effect to avoid memorization of mislabeled samples. 

\textbf{Label correction} Recall that we perform target estimation in Section \ref{sec:target}. Since the target is calculated by a moving average between noisy labels and model predictions, our approach is able to gradually correct the false labels. The correction accuracy can be calculated by $\frac{1}{N}\sum_{i}^{N}\mathbbm{1} \{\mathrm{argmax}\ \bm{y}^{[i]}=\mathrm{argmax}\ \bm{t}^{[i]}\}$, where $\bm{y}^{[i]}$ is the clean label of training sample $\bm{x}^{[i]}$. We evaluate the correction accuracy on CIFAR-10 and CIFAR-100 with 40\% symmetric label noise. CAR obtains correction accuracy of 95.1\% and 86.4\%, respectively. The confusion matrix of corrected labels w.r.t the clean labels on CIFAR-10 is shown in Figure \ref{fig:label_recover}. As we can see, CAR corrects the false labels impressively well for all classes. More results on real-world datasets can be found in Appendix \ref{sec:more_label_correct}.



\section{Conclusion}

Based on the early learning and memorization phenomenon of deep neural networks in the presence of noisy labels, we propose an adaptive regularization method that implicitly adjusts the gradient to prevent memorization on noisy labels. Through extensive experiments across multiple datasets, our approach yields comparable or even superior results to the state-of-the-art methods. 


\bibliographystyle{unsrtnat}
\bibliography{egbib}
\newpage



\appendix
\section{Theoretical analysis}
\setcounter{equation}{0}
\setcounter{lemma}{0}
\setcounter{theorem}{0}

\subsection{Gradient derivation of $\mathcal{L}_{\textrm{CAL}}$ and $\mathcal{L}_{\textrm{CAR}}$}
\label{apd:Grad_derivation}
Assume the target $t$ equals to ground truth distribution. The sample-wise $\mathcal{L}_{\textrm{CAL}}$ can be rewrite as:
\begin{align}
	\mathcal{L}_{\textrm{CAL}}=\mathcal{L}_{cace}+\lambda\mathcal{L}_{p} =-\sum_{k=1}^{K}q_{k}\log(\tau (p_{k}-q_{k})+q_{k})-\lambda \log \tau.
\end{align}
The derivation of the $\mathcal{L}_{\textrm{CAL}}$ with respect to the logits is as follows:
\begin{align}
	\label{deri_cal}
	\frac{\partial \mathcal{L}_{\textrm{CAL}}}{\partial z_{j}}=\frac{\partial \mathcal{L}_{cace}}{\partial z_{j}}=-\sum_{k=1}^{K}\frac{\tau q_{k}}{\tau (p_{k}-q_{k})+q_{k}}\frac{\partial p_{k}}{\partial z_{j}}.
\end{align}
Since $p_{k}=\mathcal{S}(\bm{z})$=$\frac{e^{z_{k}}}{\sum_{j=1}^{K}e^{z_{j}}}$, we have 
\begin{align}
	\frac{\partial p_{k}}{\partial z_{j}}=\frac{\partial \big(\frac{e^{z_{k}}}{\sum_{j=1}^{K}e^{z_{j}}} \big)}{\partial z_{j}}=\frac{\frac{\partial e^{z_{k}}}{\partial z_{j}}(\sum_{j=1}^{K}e^{z_{j}})-e^{z_{k}}\frac{\partial\big( \sum_{j=1}^{K}e^{z_{j}} \big)}{\partial z_{j}}}{(\sum_{j=1}^{K}e^{z_{j}})^2}.
\end{align}
In the case of $k=j:$
\begin{align}
	\label{deri_pk1}
	\frac{\partial p_{k}}{\partial z_{j}}=&\frac{\frac{\partial e^{z_{k}}}{\partial z_{k}}(\sum_{k=1}^{K}e^{z_{k}})-e^{z_{k}}\frac{\partial\big( \sum_{k=1}^{K}e^{z_{k}} \big)}{\partial z_{k}}}{(\sum_{k=1}^{K}e^{z_{k}})^2}=\frac{e^{z_{k}}(\sum_{k=1}^{K}e^{z_{k}})-e^{z_{k}}\cdot e^{z_{k}}}{(\sum_{k=1}^{K}e^{z_{k}})^2} \nonumber \\
	=&\frac{e^{z_{k}}}{\sum_{k=1}^{K}e^{z_{k}}}-\Big(\frac{e^{z_{k}}}{\sum_{k=1}^{K}e^{z_{k}}}\Big)^2=p_{k}-p_{k}^2.
\end{align}
In the case of $k\ne j:$
\begin{align}
	\label{deri_pk2}
	\frac{\partial p_{k}}{\partial z_{j}}=&\frac{0\cdot(\sum_{j=1}^{K}e^{z_{j}})-e^{z_{k}}\cdot e^{z_{j}}}{(\sum_{j=1}^{K}e^{z_{j}})^2}=-\frac{e^{z_{k}}}{\sum_{j=1}^{K}e^{z_{j}}} \frac{e^{z_{j}}}{\sum_{j=1}^{K}e^{z_{j}}} = -p_{k}p_{j}.
\end{align}
Combining Eq. (\ref{deri_pk1}) and (\ref{deri_pk2}) into Eq. (\ref{deri_cal}), we obtain:
\begin{align}
	\frac{\partial \mathcal{L}_{\textrm{CAL}}}{\partial z_{j}}=&-\sum_{k=1}^{K}\frac{\tau q_{k}}{\tau (p_{k}-q_{k})+q_{k}}\frac{\partial p_{k}}{\partial z_{j}}\nonumber\\=&-\frac{\tau q_{j}}{\tau (p_{j}-q_{j})+q_{j}}\frac{\partial p_{j}}{\partial z_{j}}-\sum_{k\ne j}^{K}\frac{\tau q_{k}}{\tau (p_{k}-q_{k})+q_{k}}\frac{\partial p_{k}}{\partial z_{j}} \nonumber \\ =&-\frac{\tau q_{j}}{\tau (p_{j}-q_{j})+q_{j}}(p_{j}-p_{j}^2)-\sum_{k\ne j}^{K}\frac{\tau q_{k}}{\tau (p_{k}-q_{k})+q_{k}}(-p_{k}p_{j}) \nonumber \\=&-\frac{\tau q_{j}p_{j}}{\tau(p_{j}-q_{j})+q_{j}}+p_{j}\sum_{k=1}^{K}\frac{\tau q_{k}p_{k}}{\tau(p_{k}-q_{k})+q_{k}}.
\end{align}
Therefore, if $q_{j}=q_{y}=1$, then
\begin{align}
	\frac{\partial \mathcal{L}_{\textrm{CAL}}}{\partial z_{j}}=&-\frac{\tau p_{j}}{\tau p_{j} - \tau +1}+p_{j}\frac{\tau q_{j}p_{j}}{\tau(p_{j}-1)+1} =(p_{j}-1) \frac{\tau p_{j}}{\tau p_{j}-\tau +1}= (p_{j}-1) \frac{ p_{j}}{p_{j}-1 +1/\tau}. 
\end{align}
If $q_{j}=0$, then 
\begin{align}
	\frac{\partial \mathcal{L}_{\textrm{CAL}}}{\partial z_{j}}=&p_{j}\frac{\tau q_{y}p_{y}}{\tau (p_{y}-q_{y})+q_{y}} = p_{j} \frac{p_{y}}{p_{y}-1+1/\tau}. 
\end{align}
The sample-wise $\mathcal{L}_\textrm{CAR}$ can be rewrite as (assume $\beta=1$):
\begin{align}
	\mathcal{L}_\textrm{CAR}=\mathcal{L}_\textrm{CAL} + \beta\mathcal{L}_{r\textrm{-}cace} = \mathcal{L}_\textrm{CAL} - \sum_{k=1}^{K} (\tau (p_{k}-q_{k})+q_{k}) \log q_{k}. 
\end{align}
Since we have obtain the gradient of $\mathcal{L}_\textrm{CAL}$, we now only analyze the gradient of $\mathcal{L}_{r\textrm{-}cace}$ with respect to the logits as follows:
\begin{align}
	\label{deri_rcace}
	\frac{\partial \mathcal{L}_{r\textrm{-}cace}}{\partial z_{j}}=-\sum_{k=1}^{K}\frac{\tau \partial p_{k} }{\partial z_{j}}\log q_{k}.
\end{align}
Combining Eq. (\ref{deri_pk1}) and (\ref{deri_pk2}), into Eq. (\ref{deri_rcace}), we have 
\begin{align}
	\frac{\partial \mathcal{L}_{r\textrm{-}cace}}{\partial z_{j}}=&-\tau (p_{j}-p_{j}^{2})\log q_{j} -\tau\sum_{k\ne j}^{K}(-p_{k}p_{j})\log q_{k} \nonumber \\
	=&-\tau p_{j}\log q_{j}+\tau\sum_{k=1}^{K}p_{k}p_{j}\log q_{k}.
\end{align}
We denote $\log0 = A$, thus if $q_{j}=q_{y}=1$, then
\begin{align}
	\frac{\partial \mathcal{L}_{r\textrm{-}cace}}{\partial z_{j}}=-\tau p_{j}\log1+\tau p_{j}(p_{j}\log1+\sum_{k\ne j}^{K}p_{k}\log0)=\tau p_{j}(1-p_{j})A=-A\tau p_{j}(p_{j}-1).
\end{align}
If $q_{j}=0$, then
\begin{align}
	\frac{\partial \mathcal{L}_{r\textrm{-}cace}}{\partial z_{j}}=-\tau p_{j}\log0+\tau p_{j}(p_{y}\log1 + (1-p_{y})\log0 ) = -A\tau p_{j}+\tau p_{j}(1-p_{y})A=-A\tau p_{j}p_{y}. 
\end{align}
Therefore, the gradients of $\mathcal{L}_\textrm{CAR}$ is
\begin{align}
	\frac{\partial \mathcal{L}_\textrm{CAR}}{\partial z_{j}}=\left\{ \begin{array}{ll}
		(p_{j}-1) \frac{ p_{j}}{p_{j}-1 +1/\tau}-A\tau p_{j}(p_{j}-1) , & q_{j}=q_{y}=1\\\\
		p_{j}\frac{p_{y}}{p_{y}-1+1/\tau}-A\tau p_{j}p_{y} , & q_{j}=0  \\
	\end{array} \right.
\end{align}

\subsection{Formal proof for Lemma 1, Lemma2, Theorem 1 and Theorem 2}
\label{apd:proof}
\begin{lemma}
	\label{lemma1}
	For the loss function $\mathcal{L}_\text{CAL}$ given in Eq. (5) and $\mathcal{L}_\text{CAR}$ in Eq. (7), the gradient of sample-wise $\mathcal{L}_\text{CAL}$ and $\mathcal{L}_\text{CAR}$ ($\beta=1$) with respect to the logits $z_{j}$ can be derived as
	\begin{align}
		\frac{\partial \mathcal{L}_\textrm{CAL}}{\partial z_{j}}=\left\{ \begin{array}{lll}
			(p_{j}-1)\frac{p_{j}}{p_{j}-1+1/\tau} \le 0, & q_{j}=q_{y}=1\quad& (\text{j is the true class for sample } \bm{x})\\\\
			p_{j}\frac{p_{y}}{p_{y}-1+1/\tau}\ge 0, & q_{j}=0 \quad &(\text{j is not the true class for sample }\bm{x})  \nonumber\\
		\end{array} \right.
	\end{align}
	and
	\begin{align}
		\frac{\partial \mathcal{L}_\textrm{CAR}}{\partial z_{j}}=\left\{ \begin{array}{ll}
			(p_{j}-1) \frac{ p_{j}}{p_{j}-1 +1/\tau}-A\tau p_{j}(p_{j}-1) \le 0, & q_{j}=q_{y}=1\\\\
			p_{j}\frac{p_{y}}{p_{y}-1+1/\tau}-A\tau p_{j}p_{y} \ge 0 , & q_{j}=0 \nonumber \\
		\end{array} \right.
	\end{align}
	respectively.
\end{lemma}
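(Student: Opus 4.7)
The proof will be a direct computation of the two gradients, relying on three standard ingredients: (i) the softmax Jacobian $\partial p_k/\partial z_j = p_k(\mathbbm{1}[k=j]-p_j)$; (ii) the fact that the indicator branch output $\tau$ does not depend on the logits $z_j$ of the prediction branch, so $\tau$ is treated as a constant under $\partial/\partial z_j$ and the penalty term $\mathcal{L}_p = -\log\tau$ contributes nothing to the gradient; and (iii) the one-hot structure of the ground-truth distribution $q$, i.e.\ $q_y=1$ and $q_k=0$ for $k\ne y$, which collapses the sums appearing in $\mathcal{L}_{cace}$ and $\mathcal{L}_{r\text{-}cace}$ to a single term or a handful of terms.

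The plan for $\mathcal{L}_{\textrm{CAL}}$ is to write
\[
\mathcal{L}_{\textrm{CAL}} = -\sum_{k=1}^{K} q_k \log\!\bigl(\tau(p_k-q_k)+q_k\bigr) - \lambda\log\tau,
\]
apply the chain rule, and obtain
\[
\frac{\partial \mathcal{L}_{\textrm{CAL}}}{\partial z_j} = -\sum_{k=1}^{K} \frac{\tau q_k}{\tau(p_k-q_k)+q_k}\,\frac{\partial p_k}{\partial z_j}.
\]
Substituting the softmax Jacobian and splitting the sum into the $k=j$ term and the $k\ne j$ terms gives
\[
\frac{\partial \mathcal{L}_{\textrm{CAL}}}{\partial z_j} = -\frac{\tau q_j p_j}{\tau(p_j-q_j)+q_j} + p_j\sum_{k=1}^{K}\frac{\tau q_k p_k}{\tau(p_k-q_k)+q_k}.
\]
Now I split into the two cases. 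If $q_j=q_y=1$, only $k=j$ survives in the sum and the two terms combine to $(p_j-1)\,p_j/(p_j-1+1/\tau)$, which is non-positive since $p_j\le 1$. If $q_j=0$, the isolated term vanishes, only $k=y$ survives in the sum, and we get $p_j\,p_y/(p_y-1+1/\tau)$, which is non-positive in the denominator sense; multiplying by $p_j\ge 0$ and noting the denominator's sign yields the claimed $\ge 0$ expression (a short sanity check of signs using $\tau\in(0,1)$ confirms the inequality).

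For $\mathcal{L}_{\textrm{CAR}}$, by linearity it suffices to add $\partial \mathcal{L}_{r\text{-}cace}/\partial z_j$ to the expression above (with $\beta=1$). Writing
\[
\mathcal{L}_{r\text{-}cace} = -\sum_{k=1}^{K}\bigl(\tau(p_k-q_k)+q_k\bigr)\log q_k,
\]
only the $p_k$ factors depend on $z_j$, so $\partial \mathcal{L}_{r\text{-}cace}/\partial z_j = -\tau\sum_k (\log q_k)\,\partial p_k/\partial z_j$. Using the softmax Jacobian and the convention $\log 0 = A$, I isolate the $k=y$ summand (where $\log q_k=0$) from the remaining $k\ne y$ summands (where $\log q_k=A$), then split on whether $j=y$ or $j\ne y$. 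A few lines of algebra give $-A\tau p_j(p_j-1)$ when $q_j=1$ and $-A\tau p_j p_y$ when $q_j=0$; adding these to the $\mathcal{L}_{\textrm{CAL}}$ gradient yields the stated formulas.

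The only delicate step is the bookkeeping around the $\log q_k$ values in $\mathcal{L}_{r\text{-}cace}$: the sum is split using the convention $\log 0 = A$ and the identity $\sum_{k\ne y} p_k = 1-p_y$, and one must make sure that the $k=j$ vs $k\ne j$ split from the softmax Jacobian interacts correctly with the $k=y$ vs $k\ne y$ split from the one-hot $q$. Once those two partitions are reconciled, the sign claims follow from $A<0$, $\tau\in(0,1)$, and $p_j,p_y\in[0,1]$. I do not foresee any conceptual difficulty beyond this careful case analysis.
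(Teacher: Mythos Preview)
Your proposal is correct and follows essentially the same route as the paper: write the sample-wise loss, differentiate through the softmax using $\partial p_k/\partial z_j = p_k(\mathbbm{1}[k=j]-p_j)$ while treating $\tau$ as constant in $z_j$, collapse the sums via the one-hot structure of $q$, and then do a two-case analysis, with the $\mathcal{L}_{r\text{-}cace}$ part handled by the convention $\log 0 = A$ and $\sum_{k\ne y}p_k = 1-p_y$. The only wrinkle is your phrase ``non-positive in the denominator sense,'' which is a bit muddled; the clean observation is that $\tau\in(0,1)$ forces $1/\tau>1$ and hence $p_y-1+1/\tau>0$, so the denominator is strictly positive and the sign of each expression follows directly.
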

\begin{proof} From the Appendix \ref{apd:Grad_derivation}, we obtain the gradient of the sample-wise $\mathcal{L}_\text{CAL}$ with respect to the logits $z_{j}$ is
	\begin{align}
		\label{eq:Dcal_apd}
		\frac{\partial \mathcal{L}_\text{CAL}}{\partial z_{j}}=\frac{\partial \mathcal{L}_{cace}}{\partial z_{j}}=-\sum_{k=1}^{K}\frac{\tau q_{k}}{\tau (p_{k}-q_{k})+q_{k}}\frac{\partial p_{k}}{\partial z_{j}}
	\end{align}
	where $\frac{\partial p_{k}}{\partial z_{j}}$ can be further derived base on whether $k=j$ by follows:
	\begin{align}
		\label{eq:Dpk_apd}
		\frac{\partial p_{k}}{\partial z_{j}}=\left\{ \begin{array}{ll}
			p_{k}-p_{k}^{2} & k=j\\
			-p_{j}p_{k} & k\ne j  \\
		\end{array} \right.
	\end{align}
	According to Eq. (\ref{eq:Dcal_apd}) and (\ref{eq:Dpk_apd}), the gradient of $\mathcal{L}_\textrm{CAL}$ can be derived as:
	\begin{align}
		\frac{\partial \mathcal{L}_\textrm{CAL}}{\partial z_{j}}=\left\{ \begin{array}{ll}
			(p_{j}-1)\frac{p_{j}}{p_{j}-1+1/\tau}, & q_{j}=q_{y}=1\\\\
			p_{j}\frac{p_{y}}{p_{y}-1+1/\tau}, & q_{j}=0  \\
		\end{array} \right.
	\end{align}
	Since $p_{j} \le 1$, we have $p_{j}-1 \le 0$. As $\tau < 1$, the term $\frac{p_{y}}{p_{y}-1 +1/\tau} >0 $, we have $(p_{j}-1)\frac{p_{y}}{p_{y}-1 +1/\tau} \le 0$ and $p_{j}\frac{p_{y}}{p_{y}-1 +1/\tau} \ge 0$. Similarly, the gradient of simplified $\mathcal{L}_\text{CAR}$ ($\beta=1$) can be derived as:
	\begin{align}
		\label{eq:Dcar_apd}
		\frac{\partial \mathcal{L}_\textrm{CAR}}{\partial z_{j}}=\frac{\partial \mathcal{L}_\textrm{CAL}}{\partial z_{j}}+\frac{\partial \mathcal{L}_{r\text{-}cace}}{\partial z_{j}}=\left\{ \begin{array}{ll}
			(p_{j}-1) \frac{ p_{j}}{p_{j}-1 +1/\tau}-A\tau p_{j}(p_{j}-1), & q_{j}=q_{y}=1\\\\
			p_{j}\frac{p_{y}}{p_{y}-1+1/\tau}-A\tau p_{j}p_{y} , & q_{j}=0  \\
		\end{array} \right.
	\end{align}
	Since $A$ is a negative constant, we obtain $-A\tau p_{j}(p_{j}-1) \le 0$. Thus, in the case of $q_{j}=q_{y}=1$, $\frac{\partial \mathcal{L}_\textrm{CAR}}{\partial z_{j}} \le 0$ and in the case of $q_{j}=0$, $\frac{\partial \mathcal{L}_\textrm{CAR}}{\partial z_{j}} \ge 0$ as claimed. Complete derivations can be found in the Appendix \ref{apd:Grad_derivation}.  
\end{proof}

The result in Lemma \ref{lemma1} ensures that, during the gradient decent, learning continues on true classes when trained with $\mathcal{L}_\textrm{CAL}$ and $\mathcal{L}_\textrm{CAR}$. We then prove the noise robustness of $\mathcal{L}_{r\text{-}cace}$.

Recall that noisy label of $\bm{x}$ is $\hat{y} \in \{1,...,K\}$ and its true label is $y \in\{1,...,K\}$. We assume that the noisy sample $(\bm{x},\hat{y})$ is drawn from distribution $\mathcal{D}_{\eta}(\bm{x},\hat{y})$, and the ordinary sample $(\bm{x},y)$ is drawn from $\mathcal{D}(\bm{x},y)$. Note that this paper follows the most common setting where label noise is \emph{instance-independent}. Then we have $\hat{y} = i ( y = i)$ with probability $\eta_{ii}=(1-\eta)$ and $\hat{y}=j (y=i)$ with probability $\eta_{ij}$ for all $j\ne i$ and $\sum_{j\ne i}\eta_{ij}=\eta$. If $\eta_{ij}=\frac{\eta}{K-1}$ for all $j\ne i$, then the noise is said to be \emph{uniform} or \emph{symmetric}, otherwise, the noise is said to be \emph{class-conditional} or \emph{asymmetric}. Given any classifier $f$ and loss function $\mathcal{L}$, we define the risk of $f$ under clean labels as $\mathcal{R}_{\mathcal{L}}(f)=\mathbb{E}_{\mathcal{D}(\bm{x},y)}[\mathcal{L}(f(\bm{x},y))]$, and the risk under label noise rate $\eta$ as $\mathcal{R}^{\eta}_{\mathcal{L}}(f)=\mathbb{E}_{\mathcal{D}(\bm{x},\hat{y})}[\mathcal{L}(f(\bm{x},\hat{y}))]$. Let $f^{*}$ and $f^{*}_{\eta}$ be the global minimizers of $\mathcal{R}_{\mathcal{L}}(f)$ and $\mathcal{R}^{\eta}_{\mathcal{L}}(f)$ respectively. Then, the empirical risk minimization under loss function $\mathcal{L}$ is defined to be \emph{noise-tolerant} if $f^{*}$ is a global minimum of the noisy risk $\mathcal{R}^{\eta}_{\mathcal{L}}(f)$.
\begin{lemma}
	\label{lamma2}
	For any $\bm{x}$, the sum of $\mathcal{L}_{r\text{-}cace}$ with respect to all the classes satisfies:
	\begin{align}
		0 < \sum_{j=1}^{K}\mathcal{L}_{r\text{-}cace}(f(\bm{x}),j) < A(1-K),
	\end{align}
	where $A=\log(0)$ is a negative constant that depends on the clipping operation.
\end{lemma}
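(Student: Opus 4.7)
The plan is to evaluate $\mathcal{L}_{r\text{-}cace}(f(\bm{x}), j)$ explicitly for a one-hot target, then sum over $j$ and use $\tau \in (0,1)$ to sandwich the result. Since the target $\bm{t}$ reduces to the one-hot vector $\bm{e}_j$ when the label of $\bm{x}$ is $j$, set $q_k = \mathbbm{1}[k=j]$ and substitute into the sample-wise form
\[
\mathcal{L}_{r\text{-}cace}(f(\bm{x}), j) = -\sum_{k=1}^{K}\bigl(\tau(p_k - q_k) + q_k\bigr)\log q_k.
\]
Using the clipping convention $\log(0) = A$ from Section \ref{sec:car} and $\log 1 = 0$, the $k=j$ term vanishes and each $k \ne j$ term contributes $-A\tau p_k$.

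The key algebraic step is then immediate: the sample-wise loss collapses to $-A\tau \sum_{k \ne j} p_k = -A\tau(1 - p_j)$. Summing this over all $j \in \{1,\dots,K\}$ gives
\[
\sum_{j=1}^{K} \mathcal{L}_{r\text{-}cace}(f(\bm{x}), j) = -A\tau \sum_{j=1}^{K}(1 - p_j) = -A\tau(K - 1) = A\tau(1-K),
\]
where I used $\sum_j p_j = 1$ in the penultimate equality.

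Finally, note that $A < 0$ and $K \ge 2$ imply $A(1-K) > 0$, and since $\tau \in (0,1)$ strictly (the indicator branch applies a sigmoid), we have $0 < A\tau(1-K) < A(1-K)$, which is the claimed two-sided bound. The only subtlety worth flagging is the role of the clipping constant $A$: without it the $k \ne j$ terms would formally be $0 \cdot (-\infty)$, so the argument really does rely on the definition $\log 0 := A$ introduced earlier, and the upper bound is necessarily $A(1-K)$ rather than a universal constant. No obstacle of real difficulty arises; the computation is a direct consequence of the one-hot structure of $\bm{t}$ and the fact that $\sum_j (1 - p_j) = K - 1$.
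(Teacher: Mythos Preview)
Your proof is correct and follows essentially the same route as the paper: compute the sample-wise loss for a one-hot label $j$ to obtain $-A\tau(1-p_j)$, sum over $j$ using $\sum_j p_j=1$ to get $A\tau(1-K)$, and then invoke $\tau\in(0,1)$ with $A<0$ for the strict bounds. The paper's argument is identical in substance; your added remark about the role of the clipping convention is a helpful clarification but not a departure.
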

\begin{proof}
	By the definition of $\mathcal{L}_{r\textrm{-}cace}$, we can rewrite the sample-wise $\mathcal{L}_{r\textrm{-}cace}$ as 
	\begin{align}
		\mathcal{L}_{r\textrm{-}cace}&=-\sum_{k=1}^{K}\big(\tau (p(k|\bm{x})-q(k|\bm{x}))+q(k|\bm{x})\big)\log q(k|\bm{x}) \nonumber \\
		&= - \big(\tau (p(y|\bm{x})-  q(y|\bm{x})) + q(y|\bm{x})\big)\log q(y|\bm{x}) - {\sum}_{k\ne y}\big(\tau (p(k|\bm{x})-q(k|\bm{x}))+q(k|\bm{x})\big)\log q(k|\bm{x}) \nonumber \\ 
		&= - \big(\tau p(y|\bm{x})-\tau + 1\big)\log 1 - A\tau {\sum}_{k\ne y}p(k|\bm{x}) \nonumber\\ 
		&= - A\tau(1-p(y|\bm{x})).
	\end{align}
	Therefore, we have
	\begin{align}
		\sum_{j=1}^{K}\mathcal{L}_{r\textrm{-}cace}(f(\bm{x}),j) =\sum_{j=1}^{K} - A\tau(1-p(j|\bm{x})) =-A\tau K + A\tau\sum_{j=1}^{K} p(j|\bm{x}) &= A\tau(1-K) \nonumber
	\end{align}
	As $\tau \in (0,1)$, $A$ is a negative constant, $K$ is a constant, hence 
	\begin{align}
		0 < \sum_{j=1}^{K}\mathcal{L}_{r\text{-}cace}(f(\bm{x}),j) < A(1-K), \nonumber
	\end{align}
	which concludes the proof.
\end{proof}

\begin{theorem}
	Under symmetric or uniform label noise with noise rate $\eta < \frac{K-1}{K}$, we have  
	\begin{align}
		0\le \mathcal{R}_{\mathcal{L}_{r\textrm{-}cace}}(f^{*}_{\eta}) - \mathcal{R}_{\mathcal{L}_{r\textrm{-}cace}}(f^{*}) \nonumber < \frac{-A\eta (K-1)}{K(1-\eta)-1} \nonumber
	\end{align}
	and 
	\begin{align}
		A\eta<\mathcal{R}^{\eta}_{\mathcal{L}_{r\textrm{-}cace}}(f^{*}_{\eta}) - \mathcal{R}^{\eta}_{\mathcal{L}_{r\textrm{-}cace}}(f^{*}) \le 0 \nonumber
	\end{align}
	where $f^{*}$ and $f^{*}_{\eta}$ be the global minimizers of $\mathcal{R}_{\mathcal{L}_{r\textrm{-}cace}}(f)$ and $\mathcal{R}^{\eta}_{\mathcal{L}_{r\textrm{-}cace}}(f)$ respectively.
\end{theorem}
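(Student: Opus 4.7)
The plan is to adapt the classical Ghosh-style argument for noise-tolerant losses, using Lemma 2 (which bounds the sum of $\mathcal{L}_{r\text{-}cace}$ over all class labels for a fixed input) as the central ingredient. My first step is to decompose the noisy risk as a convex combination of the clean risk and the expected total loss. Under symmetric noise with rate $\eta$, conditioning on the true label $y$ and using $\hat y = y$ w.p.\ $1-\eta$ and $\hat y = j \ne y$ w.p.\ $\eta/(K-1)$ gives
\begin{align*}
\mathcal{R}^{\eta}_{\mathcal{L}_{r\text{-}cace}}(f)
=\frac{K(1-\eta)-1}{K-1}\,\mathcal{R}_{\mathcal{L}_{r\text{-}cace}}(f)
+\frac{\eta}{K-1}\,\mathbb{E}_{\bm{x}}\Big[{\sum}_{j=1}^{K}\mathcal{L}_{r\text{-}cace}(f(\bm{x}),j)\Big].
\end{align*}
Abbreviating $\mathcal{R}(f):=\mathcal{R}_{\mathcal{L}_{r\text{-}cace}}(f)$, $\mathcal{R}^\eta(f):=\mathcal{R}^{\eta}_{\mathcal{L}_{r\text{-}cace}}(f)$, $C_1:=(K(1-\eta)-1)/(K-1)$, and $S(f):=\mathbb{E}_{\bm{x}}[\sum_j\mathcal{L}_{r\text{-}cace}(f(\bm{x}),j)]$, the hypothesis $\eta<(K-1)/K$ yields $C_1>0$ (needed later to preserve inequality directions), while Lemma 2 lifted to expectations gives $0<S(f)<-A(K-1)$ uniformly in $f$.

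For the clean-risk bounds, the lower bound $\mathcal{R}(f^*_\eta)\ge\mathcal{R}(f^*)$ is immediate from the optimality of $f^*$. For the strict upper bound, I combine the decomposition at $f^*_\eta$ and $f^*$, invoke $\mathcal{R}^{\eta}(f^*_\eta)\le \mathcal{R}^{\eta}(f^*)$, and rearrange to obtain
\[
C_1\big(\mathcal{R}(f^*_\eta)-\mathcal{R}(f^*)\big)\le \frac{\eta}{K-1}\big(S(f^*)-S(f^*_\eta)\big) < \frac{\eta}{K-1}\cdot (-A)(K-1)=-A\eta,
\]
and dividing through by $C_1>0$ yields $\mathcal{R}(f^*_\eta)-\mathcal{R}(f^*)<-A\eta(K-1)/(K(1-\eta)-1)$. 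For the noisy-risk bounds, the upper bound $\le 0$ is again just the optimality of $f^*_\eta$, while the strict lower bound $A\eta$ is obtained by substituting the decomposition into $\mathcal{R}^\eta(f^*_\eta)-\mathcal{R}^\eta(f^*)$, dropping the nonnegative $C_1(\mathcal{R}(f^*_\eta)-\mathcal{R}(f^*))$ term, and using Lemma 2 to conclude $\frac{\eta}{K-1}(S(f^*_\eta)-S(f^*))>\frac{\eta}{K-1}\cdot A(K-1)=A\eta$.

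The main obstacle is really careful sign/inequality bookkeeping rather than any deep analytic step: the facts $A<0$ and $1-K<0$ make it easy to flip inequalities silently, and the strictness in both bounds has to be traced back to the strict bounds in Lemma 2 (which themselves rely on $\tau\in(0,1)$ rather than the closed interval). The only genuine computational content is the decomposition of the symmetric-noise risk, which is textbook. The noise-rate hypothesis $\eta<(K-1)/K$ enters exactly once, to guarantee $C_1>0$ when dividing through in the clean-risk upper bound; without it, the direction of that bound would reverse.
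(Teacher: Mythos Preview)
Your proposal is correct and follows essentially the same route as the paper's proof: the symmetric-noise decomposition of $\mathcal{R}^\eta$ into $C_1\mathcal{R}(f)+\frac{\eta}{K-1}S(f)$, Lemma~2 to bound $S(f)$, and the two optimality inequalities for $f^*$ and $f^*_\eta$. The only cosmetic difference is that the paper first converts the decomposition into the sandwich $\psi\mathcal{R}(f)<\mathcal{R}^\eta(f)<-A\eta+\psi\mathcal{R}(f)$ and then evaluates at $f^*$ and $f^*_\eta$ separately, whereas you subtract the decomposition at the two points directly and bound $S(f^*)-S(f^*_\eta)$; these are algebraically equivalent rearrangements of the same argument.
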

\begin{proof}
	For symmetric noise, we have, for any $f$ \footnote{In the following, note that $\mathbb{E}_{\bm{x}}\mathbb{E}_{y|\bm{x}}=\mathbb{E}_{\bm{x},y}=\mathbb{E}_{\mathcal{D}(\bm{x},y)}$, which denote expectation with respect to the corresponding conditional distributions.}
	\begin{align}
		\mathcal{R}^{\eta}_{\mathcal{L}_{r\textrm{-}cace}}(f) &= \nonumber \mathbb{E}_{\mathcal{D}_{\eta}(\bm{x},\hat{y})}[\mathcal{L}_{r\textrm{-}cace}(f(\bm{x}),\hat{y})] \\
		&=\mathbb{E}_{\bm{x}}\mathbb{E}_{\mathcal{D}(y|\bm{x})}\mathbb{E}_{\mathcal{D}(\hat{y}|\bm{x},y)}[\mathcal{L}_{r\textrm{-}cace}(f(\bm{x}),\hat{y})]\nonumber\\\nonumber
		&=\mathbb{E}_{\mathcal{D}(\bm{x},y)}\big[(1-\eta)\mathcal{L}_{r\textrm{-}cace}(f(\bm{x}),y) +\frac{\eta}{K-1}{\sum}_{j\ne y}\mathcal{L}_{r\textrm{-}cace}(f(\bm{x}),j)\big] \nonumber \\
		&=(1-\eta)\mathcal{R}_{\mathcal{L}_{r\textrm{-}sace}}(f)+\frac{\eta}{K-1}\Big(\sum_{j=1}^{K}\mathcal{L}_{r\textrm{-}cace}(f(\bm{x}),j)-\mathcal{R}_{\mathcal{L}_{r\textrm{-}cace}}(f)\Big) \nonumber \\
		&= (1-\frac{\eta K}{K-1})\mathcal{R}_{\mathcal{L}_{r\textrm{-}cace}}(f) + \frac{\eta}{K-1}\sum_{j=1}^{K}\mathcal{L}_{r\textrm{-}cace}(f(\bm{x}),j) \nonumber
	\end{align}
	From Lemma \ref{lamma2}, for all $f$, we have:
	\begin{align}
		\psi\mathcal{R}_{\mathcal{L}_{r\textrm{-}cace}}(f)	 < \mathcal{R}^{\eta}_{\mathcal{L}_{r\textrm{-}cace}}(f) < -A\eta +\psi\mathcal{R}_{\mathcal{L}_{r\textrm{-}cace}}(f) \nonumber
	\end{align}
	where $\psi=(1-\frac{\eta K}{K-1})$. Since $\eta < \frac{K-1}{K}$, we have $\psi >0 $. Thus, we can rewrite the inequality in terms of $\mathcal{R}_{\mathcal{L}_{r\textrm{-}cace}}(f)$:
	\begin{align}
		\frac{1}{\psi}(\mathcal{R}^{\eta}_{\mathcal{L}_{r\textrm{-}cace}}(f) +A\eta)
		<\mathcal{R}_{\mathcal{L}_{r\textrm{-}cace}}(f) <\frac{1}{\psi} \mathcal{R}^{\eta}_{\mathcal{L}_{r\textrm{-}cace}}(f)\nonumber
	\end{align}
	Thus, for $f^{*}_{\eta}$, 
	\begin{align}
		\mathcal{R}_{\mathcal{L}_{r\textrm{-}cace}}(f^{*}_{\eta}) - \mathcal{R}_{\mathcal{L}_{r\textrm{-}cace}}(f^{*}) < \frac{1}{\psi} ( \mathcal{R}^{\eta}_{\mathcal{L}_{r\textrm{-}cace}}(f^{*}_{\eta}) - \mathcal{R}^{\eta}_{\mathcal{L}_{r\textrm{-}cace}}(f^{*}) - A\eta) \nonumber
	\end{align}
	or equivalently,
	\begin{align}
		\mathcal{R}^{\eta}_{\mathcal{L}_{r\textrm{-}cace}}(f^{*}_{\eta}) - \mathcal{R}^{\eta}_{\mathcal{L}_{r\textrm{-}cace}}(f^{*}) > \psi (\mathcal{R}_{\mathcal{L}_{r\textrm{-}cace}}(f^{*}_{\eta})-\mathcal{R}_{\mathcal{L}_{r\textrm{-}cace}}(f^{*}))+A\eta \nonumber
	\end{align}
	
	Since $f^{*}$ is the global minimizer of $\mathcal{R}_{\mathcal{L}_{r\text{-}cace}}(f)$ and $f^{*}_{\eta}$  is the global minimizer of $\mathcal{R}^{\eta}_{\mathcal{L}_{r\text{-}cace}}(f)$, we have 
	\begin{align}
		0\le \mathcal{R}_{\mathcal{L}_{r\textrm{-}cace}}(f^{*}_{\eta}) - \mathcal{R}_{\mathcal{L}_{r\textrm{-}cace}}(f^{*}) \nonumber < \frac{-A\eta}{\psi}
		=\frac{-A\eta (K-1)}{K(1-\eta)-1} \nonumber
	\end{align}
	and 
	\begin{align}
		A\eta<\mathcal{R}^{\eta}_{\mathcal{L}_{r\textrm{-}cace}}(f^{*}_{\eta}) - \mathcal{R}^{\eta}_{\mathcal{L}_{r\textrm{-}cace}}(f^{*}) \le 0 \nonumber
	\end{align}
	which concludes the proof.
\end{proof}

\begin{theorem}
	Under class-dependent label noise with $\eta_{ij}<1-\eta_{i},\forall j\ne i,\forall i,j \in [K]$, where $\eta_{ij}=p(\hat{y}=j|y=i), \forall j\ne i$ and $(1-\eta_{i}) = p(\hat{y}=i|y=i)$, if $\mathcal{R}_{\mathcal{L}_{r\text{-}cace}}(f^{*})=0$, then
	\begin{align}
		0\le\mathcal{R}_{\mathcal{L}_{r\text{-}cace}}^{\eta}(f^{*}) - \mathcal{R}_{\mathcal{L}_{r\text{-}cace}}^{\eta}(f^{*}_{\eta}) < G, \nonumber
	\end{align}
	where $G=A(1-K) \mathbb{E}_{\mathcal{D}(\bm{x},y)}(1-\eta_{y}) >0 $, $f^{*}$ and $f^{*}_{\eta}$ be the global minimizers of $\mathcal{R}_{\mathcal{L}_{r\textrm{-}cace}}(f)$ and $\mathcal{R}^{\eta}_{\mathcal{L}_{r\textrm{-}cace}}(f)$ respectively. 
	
\end{theorem}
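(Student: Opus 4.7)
The plan is to adapt the Ghosh--Manwani--Sastry style argument to the class-conditional setting, leveraging the explicit sample-wise form of $\mathcal{L}_{r\text{-}cace}$ extracted inside the proof of Lemma~2.

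First I would unpack the hypothesis $\mathcal{R}_{\mathcal{L}_{r\text{-}cace}}(f^{*})=0$ into a pointwise statement. The computation in Lemma~2 shows that, for a generic label $k$, $\mathcal{L}_{r\text{-}cace}(f(\bm{x}),k)=-A\tau(1-p(k|\bm{x}))\ge 0$, since $A<0$ and $\tau\in(0,1)$. A non-negative integrand whose integral vanishes is zero almost everywhere, so $p(y|\bm{x})=1$ for $\mathcal{D}$-almost every $(\bm{x},y)$. Therefore $p(j|\bm{x})=0$ and $\mathcal{L}_{r\text{-}cace}(f^{*}(\bm{x}),j)=-A\tau>0$ for every $j\ne y$, and the identity of Lemma~2 specializes to
\[
\sum_{j\ne y}\mathcal{L}_{r\text{-}cace}(f^{*}(\bm{x}),j)=(K-1)(-A\tau)<A(1-K)
\]
almost surely, with the strict inequality coming from $\tau<1$.

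Next I would decompose the noisy risk along the class-conditional law of $\hat{y}$ given $(\bm{x},y)$:
\[
\mathcal{R}^{\eta}_{\mathcal{L}_{r\text{-}cace}}(f^{*})=\mathbb{E}_{\mathcal{D}(\bm{x},y)}\!\left[(1-\eta_y)\mathcal{L}_{r\text{-}cace}(f^{*}(\bm{x}),y)+\sum_{j\ne y}\eta_{yj}\mathcal{L}_{r\text{-}cace}(f^{*}(\bm{x}),j)\right].
\]
The diagonal term vanishes by the previous step. For the off-diagonal sum I would apply the hypothesis $\eta_{yj}<1-\eta_y$ together with the strict positivity of each $\mathcal{L}_{r\text{-}cace}(f^{*}(\bm{x}),j)$, $j\ne y$, to obtain
\[
\sum_{j\ne y}\eta_{yj}\mathcal{L}_{r\text{-}cace}(f^{*}(\bm{x}),j)<(1-\eta_y)\sum_{j\ne y}\mathcal{L}_{r\text{-}cace}(f^{*}(\bm{x}),j)<(1-\eta_y)A(1-K),
\]
and taking expectation gives $\mathcal{R}^{\eta}_{\mathcal{L}_{r\text{-}cace}}(f^{*})<A(1-K)\mathbb{E}_{\mathcal{D}(\bm{x},y)}(1-\eta_y)=G$.

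The lower bound is immediate because $f^{*}_{\eta}$ is a global minimizer of $\mathcal{R}^{\eta}_{\mathcal{L}_{r\text{-}cace}}$, hence $\mathcal{R}^{\eta}_{\mathcal{L}_{r\text{-}cace}}(f^{*})\ge\mathcal{R}^{\eta}_{\mathcal{L}_{r\text{-}cace}}(f^{*}_{\eta})$. Combining this with the upper bound above yields $0\le\mathcal{R}^{\eta}_{\mathcal{L}_{r\text{-}cace}}(f^{*})-\mathcal{R}^{\eta}_{\mathcal{L}_{r\text{-}cace}}(f^{*}_{\eta})<G$, as claimed. The subtle step I expect to require the most care is keeping the strict inequality alive after taking expectation: it relies on the observation that, since $\mathcal{L}_{r\text{-}cace}(f^{*}(\bm{x}),j)>0$ for every $j\ne y$, the pointwise gap in $\sum_{j\ne y}\eta_{yj}\mathcal{L}_{r\text{-}cace}<(1-\eta_y)\sum_{j\ne y}\mathcal{L}_{r\text{-}cace}$ is strictly positive on a set of positive $\mathcal{D}$-measure, so it survives integration.
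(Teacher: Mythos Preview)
Your argument is correct, and it is in fact a cleaner packaging of the same ingredients the paper uses. The paper first derives two--sided bounds on $\mathcal{R}^{\eta}_{\mathcal{L}_{r\text{-}cace}}(f)$ valid for \emph{every} $f$, then subtracts the upper bound at $f^{*}$ from the lower bound at $f^{*}_{\eta}$; this produces an extra cross--term $\mathbb{E}_{\mathcal{D}}\big[\sum_{j\ne y}(1-\eta_y-\eta_{yj})(\mathcal{L}_{r\text{-}cace}(f^{*}_{\eta}(\bm{x}),j)-\mathcal{L}_{r\text{-}cace}(f^{*}(\bm{x}),j))\big]$ that must separately be shown nonpositive using $\mathcal{L}_{r\text{-}cace}(f^{*}(\bm{x}),j)=-A\tau\ge\mathcal{L}_{r\text{-}cace}(f^{*}_{\eta}(\bm{x}),j)$. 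You avoid this detour by invoking the hypothesis $\mathcal{R}_{\mathcal{L}_{r\text{-}cace}}(f^{*})=0$ at the outset to evaluate $\mathcal{R}^{\eta}_{\mathcal{L}_{r\text{-}cace}}(f^{*})$ directly and bound it by $G$, then subtract the nonnegative quantity $\mathcal{R}^{\eta}_{\mathcal{L}_{r\text{-}cace}}(f^{*}_{\eta})$. The only point worth making explicit in your final ``combining'' step is that $\mathcal{R}^{\eta}_{\mathcal{L}_{r\text{-}cace}}(f^{*}_{\eta})\ge 0$, which you have already set up via the nonnegativity of the sample-wise loss; without it, $\mathcal{R}^{\eta}(f^{*})<G$ alone does not give $\mathcal{R}^{\eta}(f^{*})-\mathcal{R}^{\eta}(f^{*}_{\eta})<G$.
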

\begin{proof}
	For asymmetric or class-dependent noise, we have 
	\begin{align}
		\mathcal{R}_{\mathcal{L}_{r\text{-}cace}}^{\eta}(f) &=\mathbb{E}_{\mathcal{D}_{\eta}(\bm{x},\hat{y})}[\mathcal{L}_{r\textrm{-}cace}(f(\bm{x}),\hat{y})] \nonumber \\
		&= \mathbb{E}_{\bm{x}}\mathbb{E}_{\mathcal{D}(y|\bm{x})}\Big[ (1-\eta_{y})\mathcal{L}_{r\textrm{-}cace}(f(\bm{x}),y) +\sum_{j\ne y}\eta_{yj}\mathcal{L}_{r\textrm{-}cace}(f(\bm{x}),j)\Big] \nonumber \\
		&= \mathbb{E}_{\mathcal{D}(\bm{x},y)}\Big[ (1-\eta_{y})\Big( \sum_{j=1}^{K}\mathcal{L}_{r\textrm{-}cace}(f(\bm{x}),j)  - \sum_{j\ne y}\mathcal{L}_{r\textrm{-}cace}(f(\bm{x}),j) \Big)\Big] \nonumber \\ & \quad \quad +\mathbb{E}_{\mathcal{D}(\bm{x},y)} \Big[\sum_{j\ne y}\eta_{yj}\mathcal{L}_{r\textrm{-}cace}(f(\bm{x}),j)\Big] \nonumber\\
		& <\mathbb{E}_{\mathcal{D}(\bm{x},y)} \Big[ (1-\eta_{y}) \Big( A(1-K) - \sum_{j\ne y}\mathcal{L}_{r\textrm{-}cace}(f(\bm{x}),j)\Big)\Big] \nonumber\\ & \quad \quad +   \mathbb{E}_{\mathcal{D}(\bm{x},y)} \Big[\sum_{j\ne y}\eta_{yj}\mathcal{L}_{r\textrm{-}cace}(f(\bm{x}),j)\Big] \nonumber \\
		&=A(1-K) \mathbb{E}_{\mathcal{D}(\bm{x},y)}(1-\eta_{y}) - \mathbb{E}_{\mathcal{D}(\bm{x},y)} \Big[ \sum_{j\ne y}(1-\eta_{y}-\eta_{yj}) \mathcal{L}_{r\textrm{-}cace}(f(\bm{x}),j) \Big]. \nonumber
	\end{align}
	On the other hand, we also have 
	\begin{align}
		\mathcal{R}_{\mathcal{L}_{r\text{-}cace}}^{\eta}(f) >  - \mathbb{E}_{\mathcal{D}(\bm{x},y)} \Big[ \sum_{j\ne y}(1-\eta_{y}-\eta_{yj}) \mathcal{L}_{r\textrm{-}cace}(f(\bm{x}),j) \Big] \nonumber
	\end{align}
	Hence, we obtain 
	\begin{align}
		&\mathcal{R}_{\mathcal{L}_{r\text{-}cace}}^{\eta}(f^{*}) - \mathcal{R}_{\mathcal{L}_{r\text{-}cace}}^{\eta}(f^{*}_{\eta}) < A(1-K) \mathbb{E}_{\mathcal{D}(\bm{x},y)}(1-\eta_{y})\nonumber  \\ & \quad \quad + \mathbb{E}_{\mathcal{D}(\bm{x},y)} \Big[ \sum_{j\ne y}(1-\eta_{y}-\eta_{yj}) \Big( \mathcal{L}_{r\textrm{-}cace}(f^{*}_{\eta}(\bm{x}),j) - \mathcal{L}_{r\textrm{-}cace}(f^{*}(\bm{x}),j) \Big) \Big] \nonumber
	\end{align}
	Next, we prove the bound. First, $(1-\eta_{y}-\eta_{yj}) > 0 $ as per the assumption that $\eta_{yj} < 1- \eta_{y}$. Second, our assumption has $\mathcal{R}_{r\textrm{-}cace}(f^{*})=0$, we have $\mathcal{L}_{r\textrm{-}cace}(f^{*}(\bm{x}),y)=0$. This is only satisfied iff $f^{*}_{j}(\bm{x})=1$ when $j=y$, and $f^{*}_{j}(\bm{x})=0$ when $j\ne y$. According to the definition of $\mathcal{L}_{r\textrm{-}cace}$, we have $\mathcal{L}_{r\textrm{-}cace}(f^{*}(\bm{x}),j) = -A\tau $, $\forall j\ne y$, and $\mathcal{L}_{r\textrm{-}cace}(f^{*}_{\eta}(\bm{x}),j) \le -A\tau $, $\forall j \in [K]$. We then obtain
	\begin{align}
		&\mathbb{E}_{\mathcal{D}(\bm{x},y)} \Big[ \sum_{j\ne y}(1-\eta_{y}-\eta_{yj}) \Big( \mathcal{L}_{r\textrm{-}cace}(f^{*}_{\eta}(\bm{x}),j)  - \mathcal{L}_{r\textrm{-}cace}(f^{*}(\bm{x}),j) \Big) \Big] \le 0 \nonumber
	\end{align}
	Therefore, we have
	\begin{align}
		\mathcal{R}_{\mathcal{L}_{r\text{-}cace}}^{\eta}(f^{*}) - \mathcal{R}_{\mathcal{L}_{r\text{-}cace}}^{\eta}(f^{*}_{\eta}) < A(1-K) \mathbb{E}_{\mathcal{D}(\bm{x},y)}(1-\eta_{y})\nonumber
	\end{align}	
	Since $f^{*}_{\eta}$ is the global minimizers of $\mathcal{R}^{\eta}_{\mathcal{L}_{r\textrm{-}cace}}(f)$, we have $\mathcal{R}_{\mathcal{L}_{r\text{-}cace}}^{\eta}(f^{*}) - \mathcal{R}_{\mathcal{L}_{r\text{-}cace}}^{\eta}(f^{*}_{\eta}) \ge 0$, which concludes the proof.
\end{proof}

\subsection{Comparison with existing noise-robust loss functions}
\label{sec:sample_wise_loss}
According to the definition in Section 3.5, we obtain the sample-wise
\begin{align}
	\mathcal{L}_{r\textrm{-}cace}&=-\sum_{k=1}^{K}\big(\tau (p(k|\bm{x})-q(k|\bm{x}))+q(k|\bm{x})\big)\log q(k|\bm{x}) \nonumber \\
	&= - \big(\tau (p(y|\bm{x})-  q(y|\bm{x})) + q(y|\bm{x})\big)\log q(y|\bm{x}) - {\sum}_{k\ne y}\big(\tau (p(k|\bm{x})-q(k|\bm{x}))+q(k|\bm{x})\big)\log q(k|\bm{x}) \nonumber \\ 
	&= - \big(\tau p(y|\bm{x})-\tau + 1\big)\log 1 - A\tau {\sum}_{k\ne y}p(k|\bm{x}) \nonumber\\ 
	&= - A\tau(1-p(y|\bm{x})). \text{ where } \tau\in (0,1) \text{ and } A \text{ is a negative constant }.
\end{align}
Similarly, we have sample-wise $\mathcal{L}_{mae}$ \cite{ghosh2017robust}, $\mathcal{L}_{rce}$ \cite{wang2019symmetric}, $\mathcal{L}_{gce}$ \cite{zhang2018generalized} and $\mathcal{L}_{tce}$ \cite{feng2020can} as follows
\begin{align}
	\mathcal{L}_{mae}=\sum_{k=1}^{K}|p(k|\bm{x})-q(k|\bm{x})|= (1-p(y|\bm{x}))+\sum_{k\ne y}p(k|\bm{x})=2(1-p(y|\bm{x})); \nonumber
\end{align}

\begin{align}
	\mathcal{L}_{rce}=-\sum_{k=1}^{K}p(k|\bm{x})\log q(k|\bm{x})=-p(y|\bm{x})\log 1 - \sum_{k\ne y}p(k|\bm{x})\log 0 = -A(1-p(y|\bm{x})); \nonumber
\end{align}

\begin{align}
	\mathcal{L}_{gce}=\sum_{k=1}^{K}q(k|\bm{x})\frac{1-p(k|\bm{x})^{\rho}}{\rho}=q(y|\bm{x})\frac{1-p(y|\bm{x})^{\rho}}{\rho}=\frac{1}{\rho}(1-p(y|\bm{x})^{\rho}), \rho \in (0,1]; \nonumber
\end{align}

\begin{align}
	\mathcal{L}_{tce}=\sum_{i=1}^{t}\frac{(1-p(y|\bm{x}))^{i}}{i}, t \in \mathbb{N}_{+}\ \text{denotes the order of Taylor Series.} \nonumber
\end{align}
We observe that when $\tau=1$ (even though it is impossible), $\mathcal{L}_{r\textrm{-}cace}$ is reduced to $\mathcal{L}_{rce}$. If $A=-2$ and $\tau=1$, $\mathcal{L}_{r\textrm{-}cace}$ is further reduced to $\mathcal{L}_{mae}$. Since confidence $\tau$ is various for different samples, $\mathcal{L}_{r\textrm{-}cace}$ is more like a dynamic version of $\mathcal{L}_{mae}$. As for $\mathcal{L}_{gce}$, $\lim_{\rho \rightarrow 0}\mathcal{L}_{gce}=\mathcal{L}_{ce}$ and $\mathcal{L}_{gce}=\frac{1}{2}\mathcal{L}_{mae}$ when $\rho=1$. Similarly, $\lim_{t \rightarrow \infty}\mathcal{L}_{tce}=\mathcal{L}_{ce}$ and $\mathcal{L}_{tce}=\frac{1}{2}\mathcal{L}_{mae}$ when $t=1$. Therefore, both $\mathcal{L}_{gce}$ and $\mathcal{L}_{tce}$ can be interpreted as the generalization of MAE and CE, which benefits the noise robust from MAE and training efficiency from CE. However, parameters $\rho$ and $t$ are fixed before training, so it is hard to tell what is the best parameter for the certain dataset. Instead, combined with $\mathcal{L}_\text{CAL}$, $\mathcal{L}_{r\textrm{-}cace}$ contains a dynamic confidence value $\tau$ for each sample that automatically learned from dataset, facilitating the learning from correctly-labeled samples.

\section{Algorithm}

Algorithm \ref{alg:1} provides detail pseudocode for CAR. Note that for Cosine Annealing learning rate scheduler, the condition line 8 becomes $e \ge E_{c}\ and\ \tau^{[i]} \ge \delta \ and\ e\%E_{p}==0$, where $E_{p}$ is the number of epochs in each period, we fix $E_{p}=10$ in all experiments. 

\begin{algorithm} [htb!]
	\nonumber
	\small
	\caption{Confidence adaptive regularization (CAR)}
	\label{alg:1}
	\KwIn{Deep neural network $\mathcal{N}_{\theta}$ with trainable parameters $\theta$; $\lambda$ is the parameter for penalty term $\mathcal{L}_{p}$; $\beta$ is the parameter for regularization term $\mathcal{L}_{r\text{-}cace}$; $E_{c}$ is the epoch that starts to estimate target; $\alpha$ is the momentum in target estimation; training set $D$, batch size $B$, total epoch $E_\text{max}$;}
	$\bm{t}=\bm{\hat{y}}$ \qquad \qquad\qquad\qquad\qquad\qquad\qquad\qquad\quad\quad \qquad\quad\Comment{Initialize the target by noisy labels}\;
	\For{$e=1,2,\dots,E_\text{max}$}
	{
		\textbf{Shuffle} $D$ into $\frac{|D|}{B}$ mini-batches \;
		\For{$n=1,2,\dots,\frac{|D|}{B}$}
		{
			\For{$i$ \text{in each mini-batch} $D_{n}$}
			{
				$\bm{p}^{[i]}=\mathcal{S}(\mathcal{N}_{\theta}(\bm{x}^{[i]}))\ \ $ \qquad\qquad\qquad\quad \Comment{Obtain model predictions}\;
				$\tau^{[i]} = \mathtt{sigmoid}(h^{[i]})$ \qquad\qquad\qquad\qquad\qquad \Comment{Obtain corresponding confidence}\;
				\If{$e \ge E_{c}$ and $\tau^{[i]} \ge \delta $ }{$\bm{t}^{[i]}=\alpha \bm{t}^{[i]}+(1-\alpha)\bm{p}^{[i]}$ \Comment{Target estimation}\;} 
				
			}
			\textbf{Calculate the loss} $\mathcal{L}_\text{CAR}= \mathcal{L}_{cace} + \lambda \mathcal{L}_{p} + \beta \mathcal{L}_{r\textrm{-}cace}= -\frac{1}{B}\sum_{i=1}^{B}(\bm{t}^{[i]})^{T}\log\big(\tau^{[i]}   
			(\bm{p}^{[i]}-\bm{t}^{[i]})+\bm{t}^{[i]}\big) -\frac{\lambda}{B}\sum_{i=1}^{B}\log(\tau^{[i]}) -\frac{\beta}{B}\sum_{i=1}^{B}\big(\tau^{[i]}(\bm{p}^{[i]}-\bm{t}^{[i]})+\bm{t}^{[i]}\big)^{T}\log(\bm{t}^{[i]})$ \;
			\textbf{Update} $\theta$ using stochatic gradient descent \; 
			
		}
	}
	\textbf{Output} $\theta$.
\end{algorithm}

\section{More results of label correction and confidence value}
\label{sec:more_label_correct}	
We report the label correction accuracy for various level of label noise on CIFAR-10 and CIFAR-100 in Table \ref{table:correct_acc}. Figure \ref{fig:more_confumatrix} displays the confusion matrix of corrected label w.r.t. the clean labels on CIFAR-10 with 60\% symmetric, 80\% symmetric and 40\% asymmetric label noise respectively. We also show the corrected labels for real-world datasets in Figure \ref{fig:webvision_correct} and Figure \ref{fig:clothing1M_correct}.

We report the confidence value for high level of label noise on CIFAR-10 in Figure \ref{fig:confidence_hm_60} and Figure \ref{fig:confidence_hm_80}. As we can see, the confidence values on the diagonal blocks remain relatively higher than those non-diagonal blocks.

\begin{table*}[htb!]
	\begin{center}
		\resizebox{1.0\textwidth}{!}{
			\centering
			\begin{tabular}{ p{25mm} c c c  c c c|c c c c c } 
				\toprule
				\multicolumn{2}{c}{\multirow{1}{*}{Dataset}} & \multicolumn{5}{c|}{CIFAR-10} &\multicolumn{5}{c}{CIFAR-100}\\\midrule
				\multicolumn{2}{c}{\multirow{2}{*}{Noise ratio}} & \multicolumn{4}{c}{symm} &\multicolumn{1}{c|}{asymm} & \multicolumn{4}{c}{symm}&\multicolumn{1}{c}{asymm} \\ 
				\multicolumn{2}{c}{}& 20\% & \multicolumn{1}{c}{40\%}&\multicolumn{1}{c}{60\%}&\multicolumn{1}{c}{80\%}& 40\% & \multicolumn{1}{c}{20\%}&\multicolumn{1}{c}{40\%}&\multicolumn{1}{c}{60\%}&\multicolumn{1}{c}{80\%}&\multicolumn{1}{c}{40\%}\\
				\midrule
				\multirow{1}{*}{Correction accuracy} &  & 97.3	 &95.1&91.1&81.1&93.8&92.6&86.4&76.5&40.4&87.1\\ 
				\bottomrule

			\end{tabular}
		}
	\end{center}
	\caption{Correction accuracy (\%) on CIFAR-10 and CIFAR-100 with various levels of label noise injected to training set.}
	\label{table:correct_acc}
\end{table*}

\begin{figure}[t]
	\begin{center}
		\includegraphics[width=1.0\linewidth]{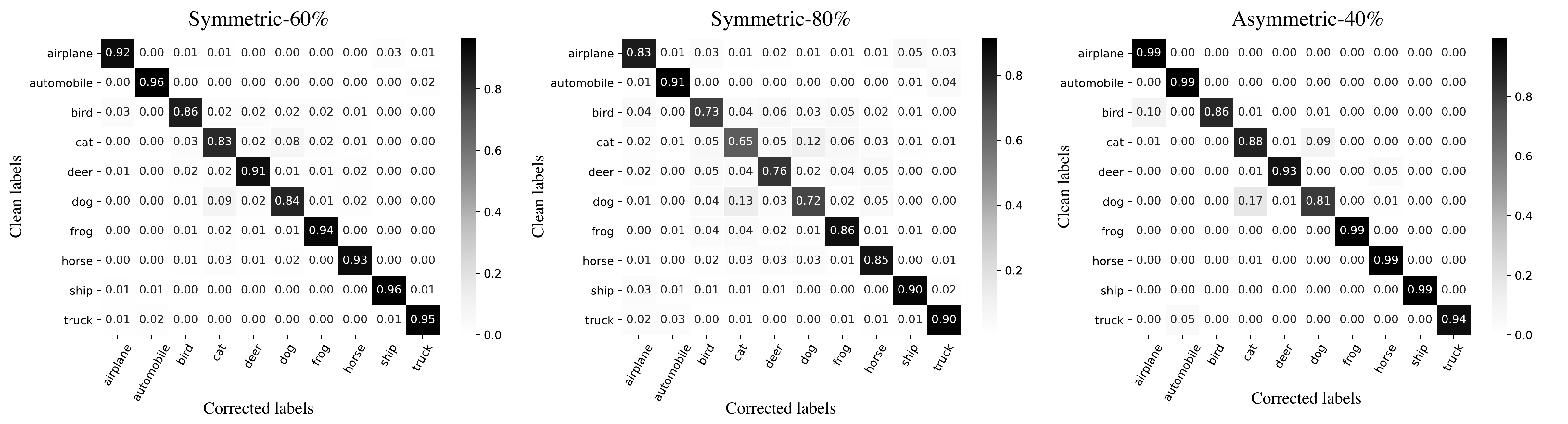}
	\end{center}
	\caption{Confusion matrix of corrected labels w.r.t clean labels on CIFAR-10 with 60\% symmetric, 80\% symmetric and 40\% asymmetric label noise respectively.}
	\label{fig:more_confumatrix}
\end{figure}

\begin{figure}[t]
	\centering
	\begin{minipage}[t]{0.45\textwidth}
		\centering
		\includegraphics[width=0.8\linewidth]{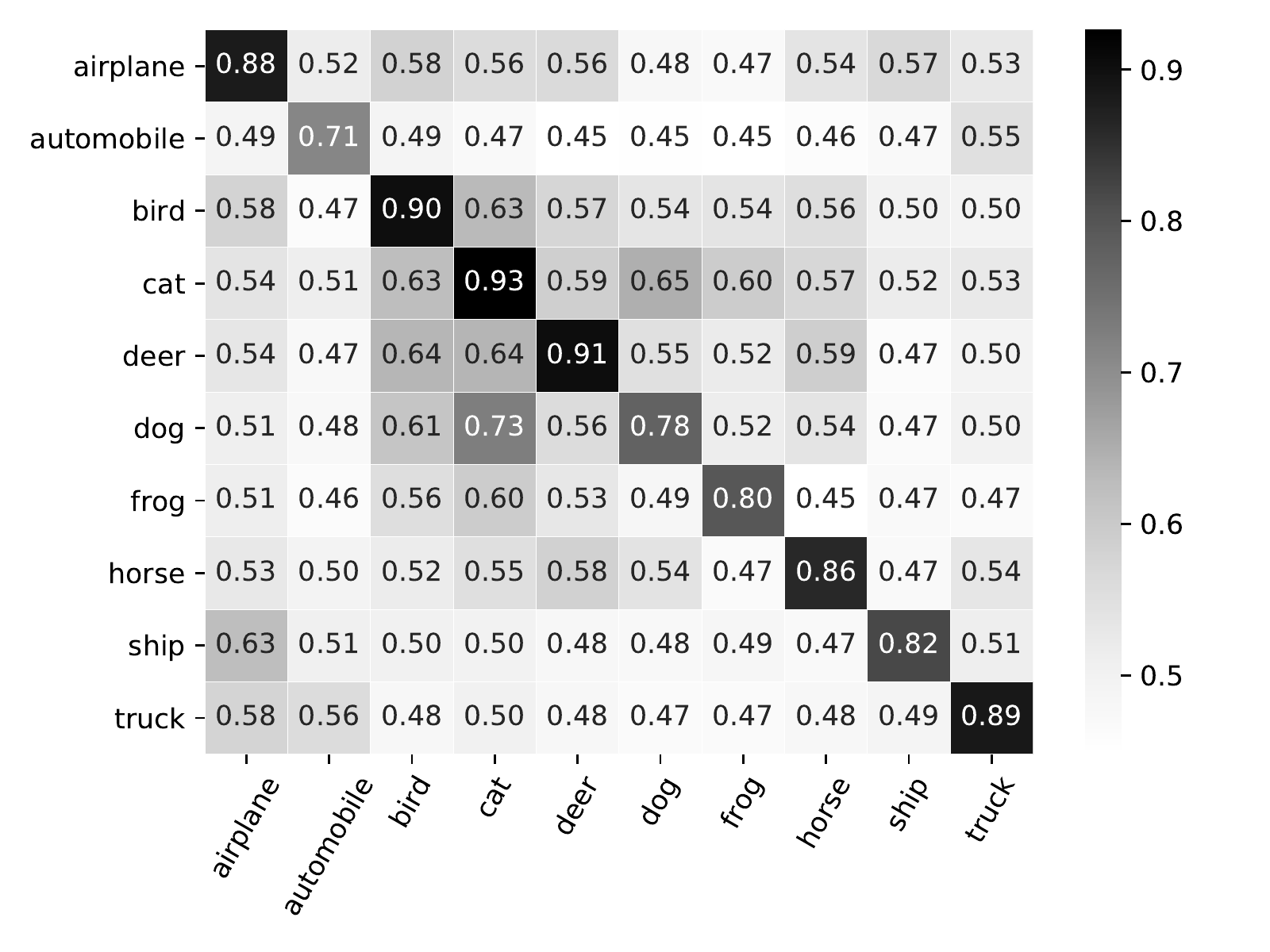}
		\caption{Average confidence values $\tau$ of false labels w.r.t clean labels on CIFAR-10 with 60\% symmetric label noise.}
		\label{fig:confidence_hm_60}
	\end{minipage}
	\hspace{0.5cm}
	\begin{minipage}[t]{0.45\textwidth}	
		\centering
		\includegraphics[width=0.8\linewidth]{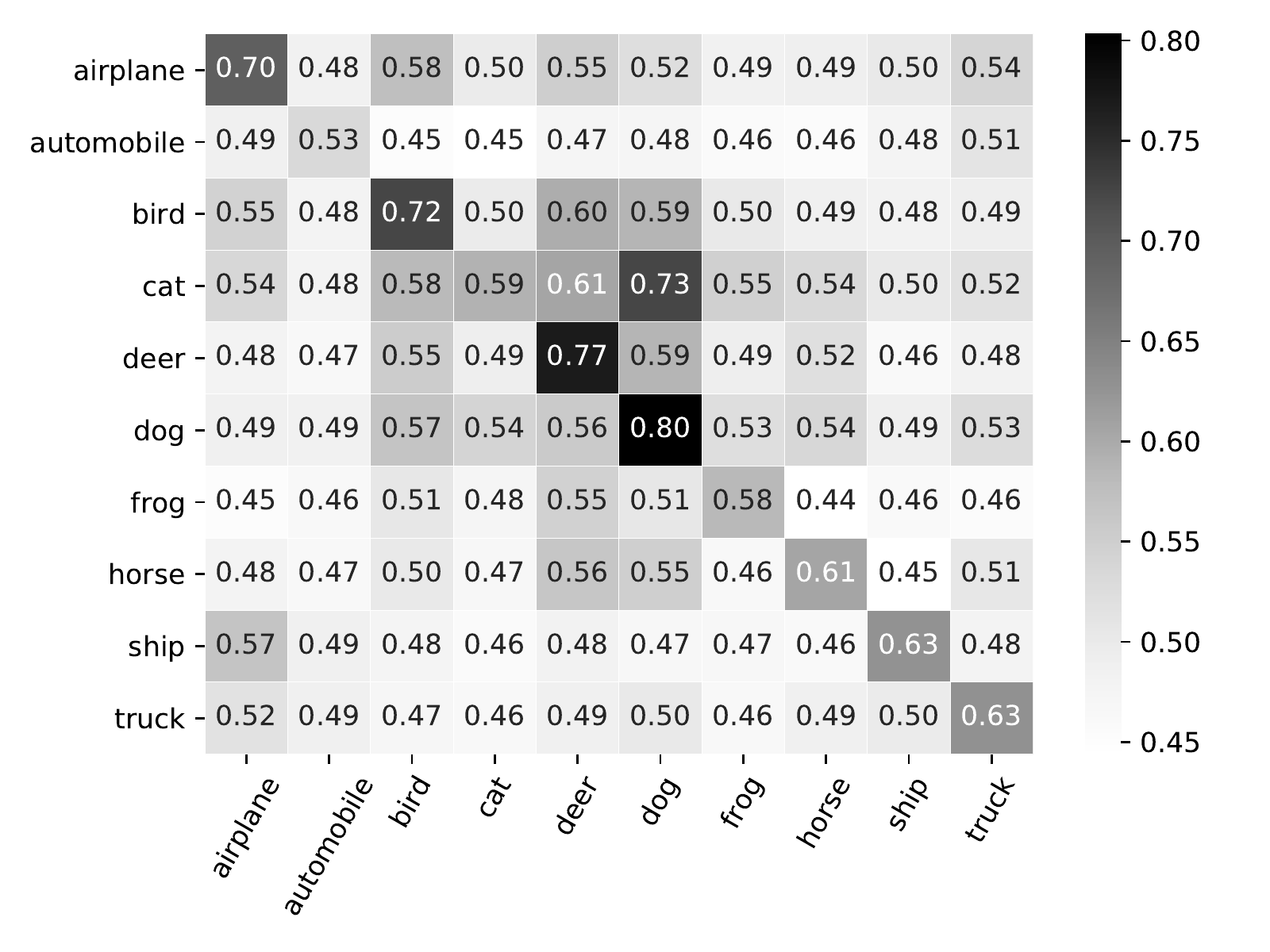}
		\caption{Average confidence values $\tau$ of false labels w.r.t clean labels on CIFAR-10 with 80\% symmetric label noise.}
		\label{fig:confidence_hm_80}
	\end{minipage}
	\vspace{-0.9em}
\end{figure}

\section{Detail description of experiments}
\label{sec:details_of_exp}
Source code for the experiments is available in the zip file. All experiments are implemented in PyTorch and run in a single Nvidia GTX 1080 GPU. For CIFAR-10 and CIFAR-100, we do not perform early stopping since we don't assume the presence of clean validation data. All test accuracy are recorded from the last epoch of training. For Clothing1M, it provides 50k, 14k, 10k refined clean data for training, validation and testing respectively. Note that we do not use the 50k clean data. We report the test accuracy when the performance on validation set is optimal. All tables of CIFAR-10/CIFAR-100 report the mean and standard deviation from 3 trails with different random seeds. As for larger datasets, we only perform a single trail.

\subsection{Dataset description and preprocessing}
\label{sec:preprocess}
The information of datasets are described in Table \ref{table:datasets}. CIFAR-10 and CIFAR-100 are clean datasets, we describe the label noise injection in Appendix \ref{apd:noise_inject}. Clothing1M consists of 1 million training images from 14 categories collected from online shopping websites with noisy labels generated from surrounding texts. Its noise level is estimated as 38.5\% \cite{song2019prestopping}. Following \cite{jiang2017mentornet,chen2019understanding}, we use the mini WebVision dataset which contains the top 50 classes from the Google image subset of WebVision, which results in approximate 66 thousand images. The noise level of WebVision is estimated at 20\% \cite{li2017webvision}.

As for data preprocessing, we apply normalization and regular data augmentation (i.e. random crop and horizontal flip) on the training sets of all datasets. The cropping size is consistent with existing works \cite{liu2020early,li2020dividemix}. Specifically, 32 for CIFAR-10 and CIFAR-100, 224 $\times$ 224 for Clothing 1M (after resizing to 256 $\times$ 256), and 227 $\times$ 227 for Webvision.

\begin{table}
	\caption{Detail information of experiment.}
	\begin{subtable}[t]{0.48\textwidth}
		\caption{Description of the datasets used in the experiments.}
		\begin{center}
			\resizebox{1.0\textwidth}{!}{
				\centering
				\begin{tabular}{c c c c  c c  c} 
					\toprule
					Dataset & \# of train & \# of val & \# of test & \# of classes & input size& Noise rate (\%) \\
					\midrule
					\multicolumn{7}{c}{\multirow{1}{*}{Datasets with clean annotation}} \\
					\midrule
					CIFAR-10 & 50K& - & 10K& 10 & 32 $\times$ 32& $\approx$ 0.0\\
					CIFAR-100 & 50K & - & 10K & 100 & 32 $\times$ 32 & $\approx$ 0.0\\
					\midrule
					\multicolumn{7}{c}{\multirow{1}{*}{Datasets with real world noisy annotation}} \\
					\midrule
					Clothing1M & 1M & 14K & 10K & 14 & 224 $\times$ 224 & $\approx$ 38.5\\
					Webvision 1.0 & 66K & - &2.5K & 50 & 256 $\times$ 256& $\approx$ 20.0\\	
					\bottomrule
				\end{tabular}
			}
		\end{center}
		\label{table:datasets}
	\end{subtable}
	\begin{subtable}[t]{0.48\textwidth}
		\caption{Description of the hyperparameters used in our approach.}
		\begin{center}
			\resizebox{1.0\textwidth}{!}{
				\centering
				\begin{tabular}{c c   } 
					\toprule
					Hyperparameter & Description  \\
					\midrule
					$\lambda$ & Control the strength of penalty loss in $\mathcal{L}_\text{CAL}$.\\
					$\beta$ & Control the strength of regularization term $\mathcal{L}_{r\text{-}cace}$.\\
					$E_{c}$ & The epoch starts to estimate target.\\
					$\alpha$ & The momentum in target estimation. \\
					$\delta$ & The threshold of confidence in target estimation.\\
					
					\bottomrule
				\end{tabular}
			}
		\end{center}
		\label{table:hyperparameter}
	\end{subtable}
\end{table}

\subsection{Simulated label noise injection}
\label{apd:noise_inject}
Since the CIFAR-10 and CIFAR-100 are initially clean, we follow \cite{tanaka2018joint,patrini2017making} for symmetric and asymmetric label noise injection. Specifically, symmetric label noise is generated by randomly flipping a certain fraction of the labels in the training set following a uniform distribution. Asymmetric label noise is simulated by flipping their class to another certain class according to the mislabel confusions in the real world. For CIFAR-10, the asymmetric noisy labels are generated by mapping \emph{truck} $\rightarrow$ \emph{automobile}, \emph{bird} $\rightarrow$ \emph{airplane}, \emph{deer} $\rightarrow$ \emph{horse} and \emph{cat} $\leftrightarrow$ \emph{dog}. For CIFAR-100, the noise flips each class into the next, circularly within super-classes.

\subsection{Training procedure}
\textbf{CIFAR-10/CIFAR-100}: We use a ResNet-34 and train it using SGD with a momentum of 0.9, a weight decay of 0.001, and a batch size of 64. The network is trained for 500 epochs for both CIFAR-10 and CIFAR-100. We use the cosine annealing learning rate \cite{loshchilov2016sgdr} where the maximum number of epoch for each period is 10, the maximum and minimum learning rate is set to 0.02 and 0.001 respectively. As for cross entropy with MultiStep learning rate scheduler  in Figure 1 and Figure 3 in the paper, we set the initial learning rate as 0.02, and reduce it by a factor of 10 after 100 and 200 epochs. The reason that we train the model 500 epochs in total is to fully evaluate whether the model will overfit mislabeled samples, which avoids the interference caused by early stopping \cite{li2020gradient} (i.e. the model may not start overfitting mislabeled samples when the number of training epochs is small, especially when learning rate scheduler is cosine annealing \cite{loshchilov2016sgdr}). 

\textbf{Clothing1M}: Following \cite{xiao2015learning,wang2019symmetric}, we use a ResNet-50 pretrained on ImageNet. We train the model with batch size 64. The optimization is done using SGD with a momentum 0.9, and weight decay 0.001. We use the same cosine annealing learning rate as CIFAR-10 except the minimum learning rate is set to 0.0001 and total epoch is 400. For each epoch, we sample 2000 mini-batches from the training data ensuring that the classes of the noisy labels are balanced.

\textbf{Webvision}: Following \cite{li2020dividemix,liu2020early}, we use an InceptionResNetV2 as the backbone architecture. All other optimization details are the same as for CIFAR-10, except for the weight decay (0.0005) and the batch size (32).

\subsection{Hyperparameters selection and sensitivity}
\label{sec:hyper_sensi}
Table \ref{table:hyperparameter} provides a detailed description of hyperparameters in our approach. We perform hyperparameter tuning via grid search: $\lambda=[0.5,10, 50]$, $\beta=[0.0,0.1,0.3,0.5]$, $E_{c}=[20,60,100]$, $\alpha=[0.7,0.9,0.99]$ and $\delta=[0,0,0.35,0.65,0.95]$. For CIFAR-10, the selected value are $\lambda=0.5$, $\beta=0.0$, $E_{c}=60$, $\alpha=0.9$ and $\delta=0.0$. For CIFAR-100 with 40\% asymmetric label noise, the selected value are $\lambda=10$, $\beta=0.1$, $E_{c}=20$, $\alpha=0.9$, $\delta=0.0$. For CIFAR-100 with 20\%/40\%/60\% symmetric label noise, we set $\lambda=10$, $\beta=0.1$, $E_{c}=60$, $\alpha=0.9$, $\delta=0.95$ and $\lambda=50$, $\beta=0.1$, $E_{c}=60$, $\alpha=0.9$, $\delta=0.0$ for 80\% symmetric label noise. For Webvision, we set  $\lambda=50$, $\beta=0.1$, $E_{c}=200$, $\alpha=0.9$, $\delta=0.0$. For Clothing1M, we set  $\lambda=50$, $\beta=0.1$, $E_{c}=60$, $\alpha=0.8$, $\delta=0.0$.

Figure \ref{fig:hyper_sensi} and Figure \ref{fig:hyper_sensi_cifar100} shows the hyperparameters sensitivity of CAR on CIFAR-10 and CIFAR-100 with 60\% symmetric label noise respectively. The coefficient of penalty loss $\lambda$ needs to be large than 0 to avoid trivial solution but also cannot be too large for CIFAR-10, avoiding neglecting $\mathcal{L}_{cace}$ term in the loss. As the CIFAR-10 is an easy dataset, no additional regularization requires by $\mathcal{L}_{r\text{-}cace}$ term. Therefore, the regularization coefficient $\beta$ should be 0 and large $\beta$ may cause model to underfit. The performance is robust to $E_{c}$ and $\alpha$, as long as the momentum $\alpha$ is large enough (e.g. larger than 0.7). The choice of confidence threshold $\delta$ depends on the difficulty of dataset. A larger $\delta$ will slow down the speed of target estimation but helps exclude ambiguous predictions with low confidence values. Overall, the sensitivity to hyperparameters is quite mild and the performance is quite robust, unless the parameter is set to be very large or very small, resulting in neglecting $\mathcal{L}_{cace}$ term or underfitting. We can observe the similar results of CIFAR-100 in Figure \ref{fig:hyper_sensi_cifar100}.


\begin{figure}[t]
	\begin{center}
		\includegraphics[width=1.0\linewidth]{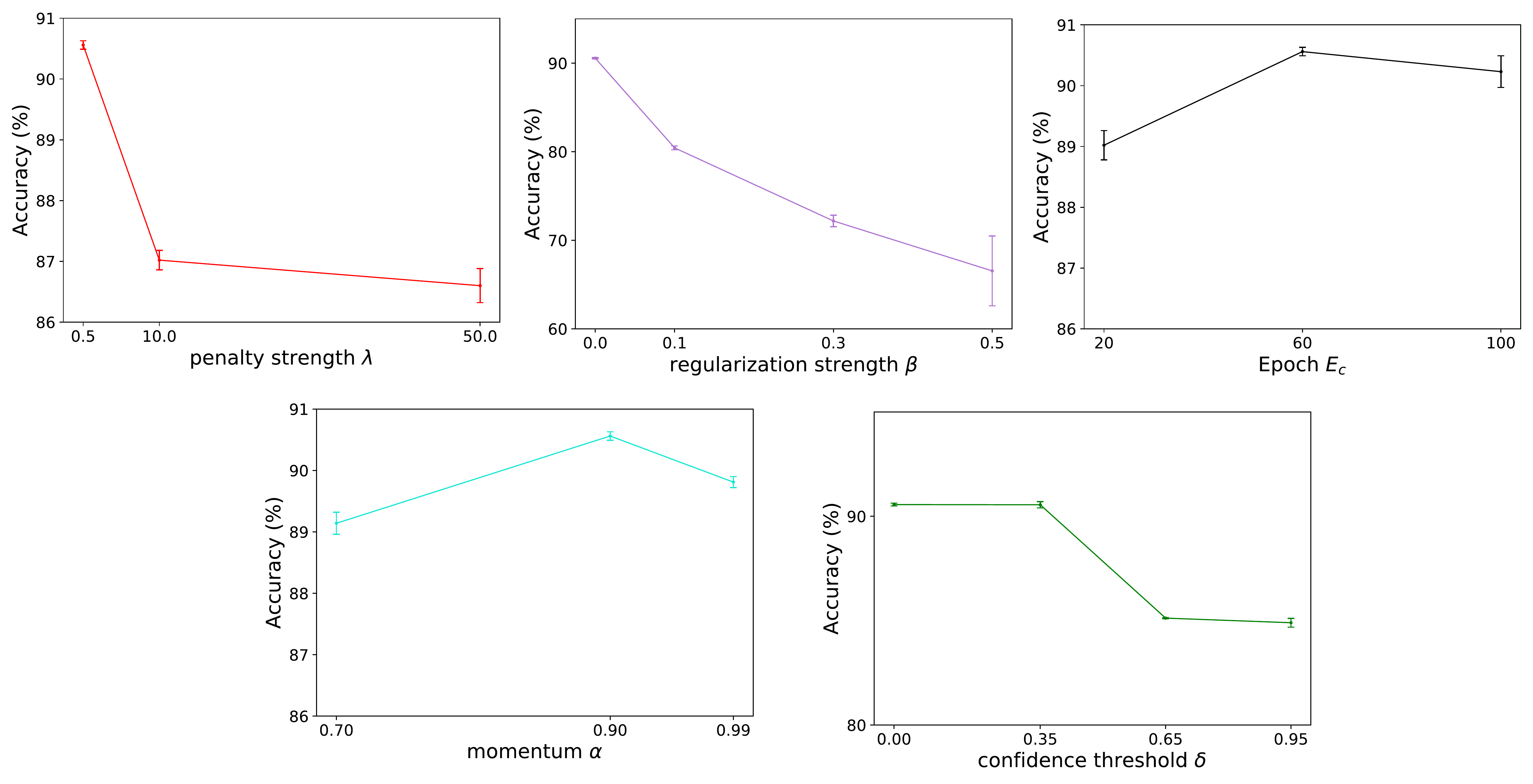}
	\end{center}
	\caption{Test accuracy on CIFAR-10 with 60\% symmetric label noise. The mean accuracy over three runs is reported, along with bars representing one standard deviation from the mean. In each experiment, the rest of hyperparameters are fixed to the values reported in Section \ref{sec:hyper_sensi}.}
	\label{fig:hyper_sensi}
\end{figure}

\begin{figure}[t]
	\begin{center}
		\includegraphics[width=1.0\linewidth]{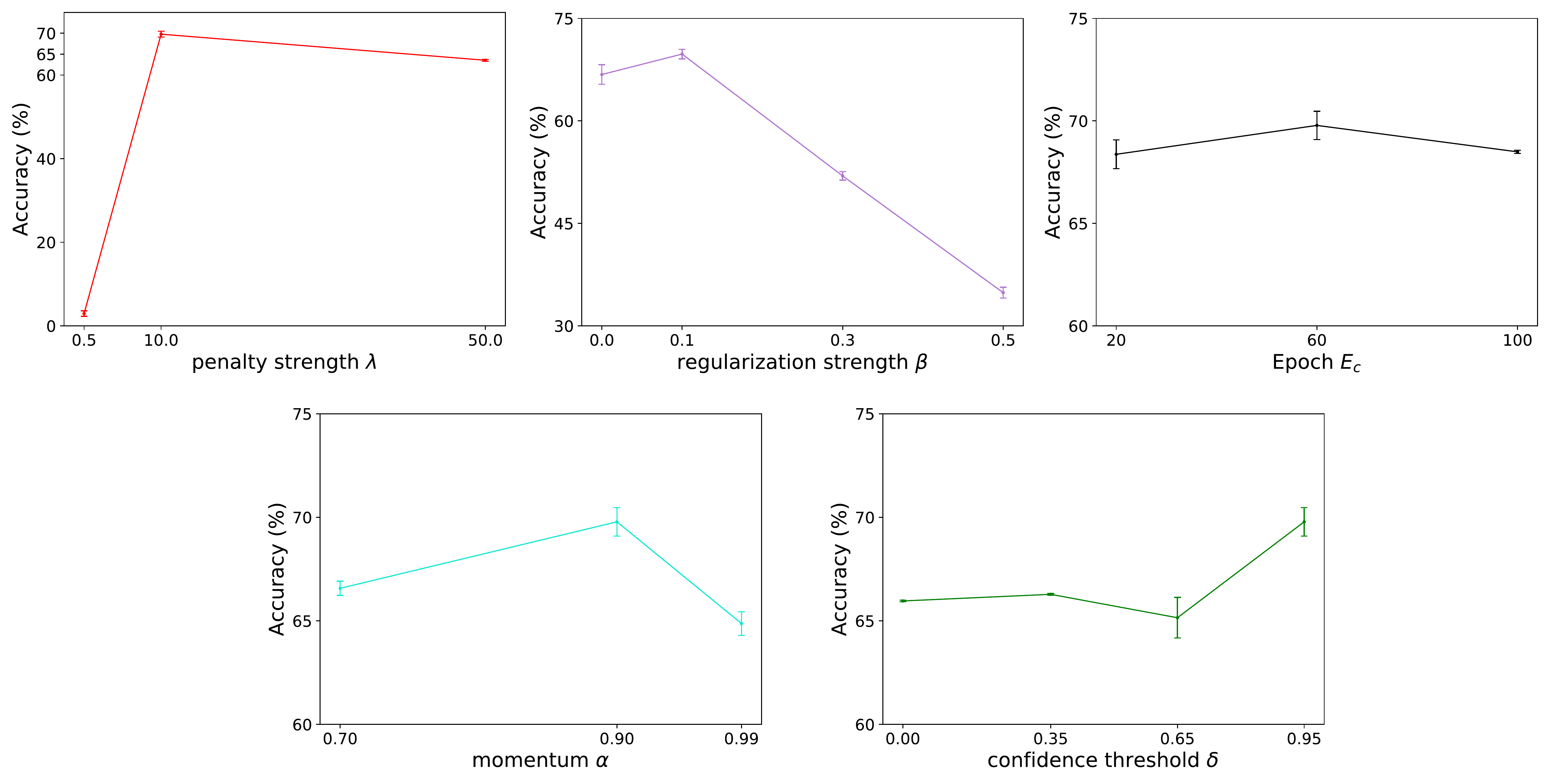}
	\end{center}
	\caption{Test accuracy on CIFAR-100 with 60\% symmetric label noise. The mean accuracy over three runs is reported, along with bars representing one standard deviation from the mean. In each experiment, the rest of hyperparameters are fixed to the values reported in Section \ref{sec:hyper_sensi}.}
	\label{fig:hyper_sensi_cifar100}
\end{figure}


\begin{figure}[t]
	\begin{center}
		\includegraphics[width=0.9\linewidth]{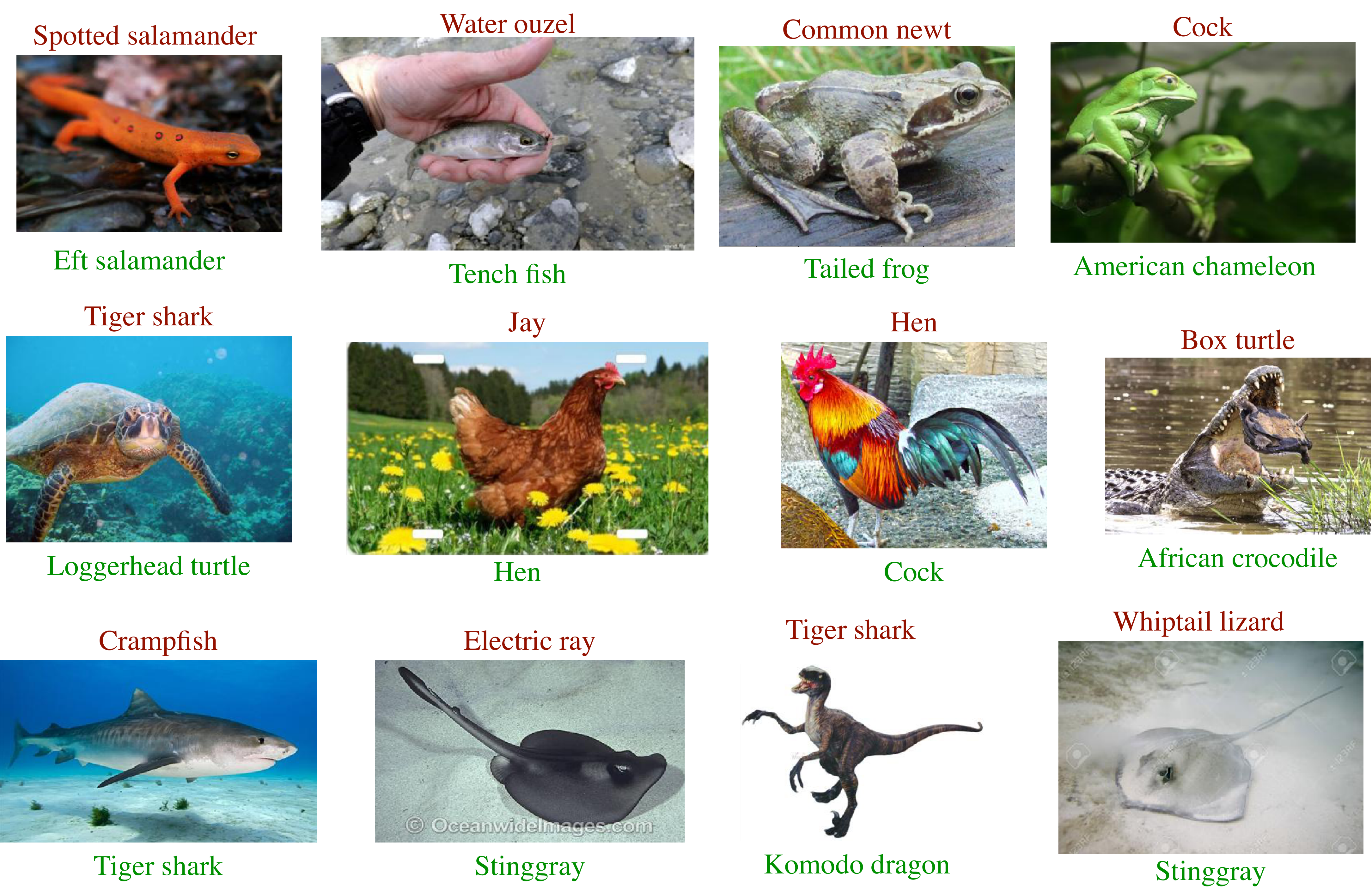}
	\end{center}
	\caption{Label correction of Webvision images. Given noisy labels are shown above in red and the corrected labels are shown below in green.}
	\label{fig:webvision_correct}
\end{figure}

\begin{figure}[htb!]
	\begin{center}
		\includegraphics[width=0.9\linewidth]{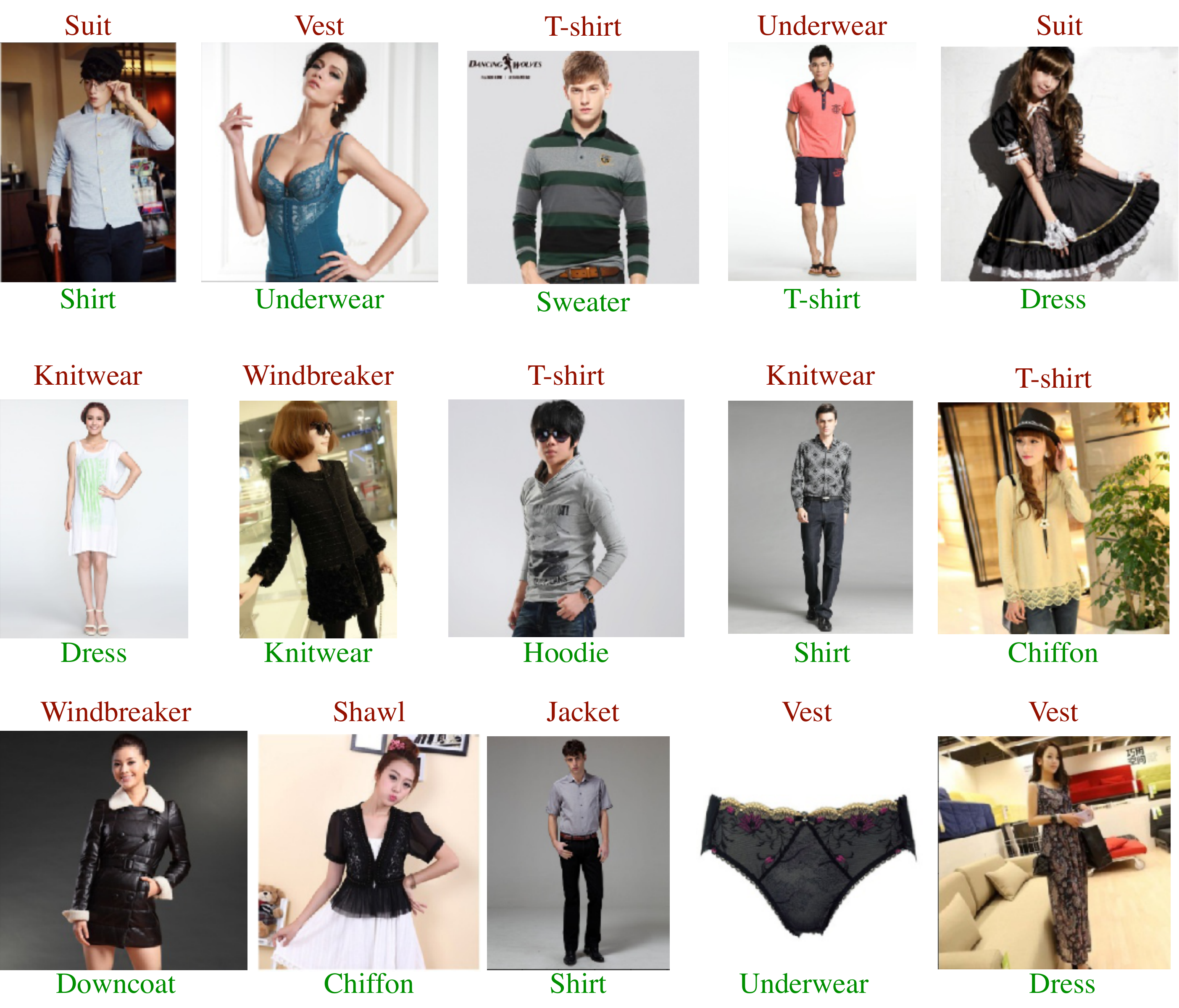}
	\end{center}
	\caption{Label correction of Clothing1M images. Given noisy labels are shown above in red and the corrected labels are shown below in green.}
	\label{fig:clothing1M_correct}
\end{figure}

\subsection{Performance with different target estimation strategies}
\label{apd:dif_stra}
We compare the performance of CAR with three strategies: 1) our strategy in Section 3.4. 2) temporal ensembling \cite{laine2016temporal} that adopted in ELR \cite{liu2020early}. 3) directly using the noisy labels $\hat{y}$ without target estimation. Table \ref{tab:dif_strategy} shows the results. As we can see, compared to CAR without target estimation, CAR with temporal ensembling does not improve much performance in easy cases (e.g. 40\% symmetric label noise), and it even gets worse performance in hard cases (e.g. 80\% symmetric label noise). However, CAR with our strategy achieves much better performance. We also conduct the experiments that use CE with different target estimation strategies. Surprisingly, CE with our strategy can achieve better performance to CAR in CIFAR-10 with 40\% asymmetric noise. However, the overall performance is worse than the performance of using CAR, due to the reason that CE will memorize noisy labels after early learning phase.



\begin{table}
	\caption{The test accuracy of CAR and CE with different target estimation strategies. All the following experiments use Cosine Annealing learning rate scheduler \cite{loshchilov2016sgdr}.}
	\begin{center}
		\resizebox{1\textwidth}{!}{
			\centering
			\begin{tabular}{ p{50mm} c c  c c c c c } 
				\toprule
				\multicolumn{2}{c}{\multirow{1}{*}{Dataset}} & \multicolumn{3}{c}{CIFAR-10} &\multicolumn{3}{c}{CIFAR-100}\\ \cmidrule(lr){3-5} \cmidrule(lr){6-8}
				\multicolumn{2}{c}{\multirow{1}{*}{Noise type}} & \multicolumn{2}{c}{symm} &\multicolumn{1}{c}{asymm} & \multicolumn{2}{c}{symm}&\multicolumn{1}{c}{asymm} \\   \cmidrule(lr){3-4} \cmidrule(lr){5-5} \cmidrule(lr){6-7} \cmidrule(lr){8-8}
				\multicolumn{2}{c}{\multirow{1}{*}{Noise ratio}}&  \multicolumn{1}{c}{40\%}&\multicolumn{1}{c}{80\%}& \multicolumn{1}{c}{40\%} &\multicolumn{1}{c}{40\%}&\multicolumn{1}{c}{80\%}&\multicolumn{1}{c}{40\%}\\
				\midrule
				
				\multirow{1}{*}{CAR with our strategy} & &\textbf{93.49} $\pm$ \textbf{0.07} &\textbf{80.98} $\pm$ \textbf{0.27}&\textbf{92.09} $\pm$ \textbf{0.12}&\textbf{75.38} $\pm$ \textbf{0.08}&\textbf{38.24} $\pm$ \textbf{0.55}&\textbf{74.89} $\pm$ \textbf{0.20}\\
				\multirow{1}{*}{CAR with temporal ensembling \cite{laine2016temporal}} & &89.52 $\pm$ 0.30 &64.07 $\pm$ 2.04&80.52 $\pm$ 2.21&70.80 $\pm$ 0.38&10.28 $\pm$ 1.67&63.91 $\pm$ 1.65\\
				\multirow{1}{*}{CAR w/o target estimation} & &89.47 $\pm$ 0.50&76.91 $\pm$ 0.22&88.23 $\pm$ 0.22&69.91 $\pm$ 0.21&31.33 $\pm$ 0.38&55.68 $\pm$ 0.17\\
				
				\midrule
				\multirow{1}{*}{CE with our strategy} & &92.64 $\pm$ 0.21&75.51 $\pm$ 0.38&\textbf{92.21} $\pm$ \textbf{0.11}&68.53 $\pm$ 0.47&32.36 $\pm$ 0.44&73.01 $\pm$ 0.90\\
				\multirow{1}{*}{CE with temporal ensembling \cite{laine2016temporal}} & &92.12 $\pm$ 0.16&72.87 $\pm$ 1.98&89.71 $\pm$ 1.43&70.45 $\pm$ 0.22&9.34 $\pm$ 0.78&66.38 $\pm$ 0.57\\
				\multirow{1}{*}{CE w/o target estimation} & &78.26 $\pm$ 0.74&56.42 $\pm$ 2.49&86.55 $\pm$ 1.06&46.34 $\pm$ 0.56&11.55 $\pm$ 0.35&48.86 $\pm$ 0.04\\
				\bottomrule	
			\end{tabular}
		}
	\end{center}
	\label{tab:dif_strategy}
\end{table}



\end{document}